\def\I{{\bf I}}
\def\x{{\bf x}}
\def\y{{\bf y}}
\def\z{{\bf z}}
\def\0{{\bf 0}}
\def\1{{\bf 1}}
\newtheorem{lemma}{Lemma}
\newtheorem{proposition}{Proposition}
\numberwithin{theorem}{section}
\numberwithin{lemma}{section}
\numberwithin{remark}{section}
\numberwithin{cor}{section}
\numberwithin{proposition}{section}
\numberwithin{definition}{section}
\newcommand{\tabref}[1]{Table~\ref{#1}}
\newcommand{\secref}[1]{Sec.~\ref{#1}}
\newcommand{\appref}[1]{Appendix~\ref{#1}}
\newcommand{\figref}[1]{Fig.~\ref{#1}}
\newcommand{\lemref}[1]{Lemma~\ref{#1}}
\newcommand{\prpref}[1]{Proposition~\ref{#1}}
\newcommand{\eqnref}[1]{Eqn.~(\ref{#1})}
\newcommand{\algref}[1]{Alg.~\ref{#1}}
\renewcommand{\tilde}{\widetilde}
\renewcommand{\frac}{\dfrac}
\def\cite#1{\citep{#1}}
\def\blue#1{\textcolor{blue}{#1}}
\def\green#1{\textcolor{green}{#1}}
\author{%
  Ziyan Wang \\
  Georgia Institute of Technology \\
  \texttt{wzy@gatech.edu}
  \And
  Sizhe Wei \\
  Georgia Institute of Technology \\
  \texttt{swei@gatech.edu}\\
  \And
  Xiaoming Huo \\
  Georgia Institute of Technology \\
  \texttt{huo@gatech.edu}\\
  \And
  Hao Wang \\
  Rutgers University \\
  \texttt{hw488@cs.rutgers.edu}\\
}
\title{PoGDiff: Product-of-Gaussians Diffusion Models for Imbalanced Text-to-Image Generation}
\begin{document}

\maketitle

\begin{abstract}
Diffusion models have made significant advancements in recent years. 
However, their performance often deteriorates when trained or fine-tuned on imbalanced datasets. 
This degradation is largely due to the disproportionate representation of majority and minority data in image-text pairs. 
In this paper, we propose a general fine-tuning approach, dubbed PoGDiff, to address this challenge. 
Rather than directly minimizing the KL divergence between the predicted and ground-truth distributions, PoGDiff replaces the ground-truth distribution with a Product of Gaussians (PoG), which is constructed by combining the original ground-truth targets with the predicted distribution conditioned on a neighboring text embedding. 
Experiments on real-world datasets demonstrate that our method effectively addresses the imbalance problem in diffusion models, improving both generation accuracy and quality.
\end{abstract}

\section{Introduction} \label{sec:intro}
The development of diffusion models~\citep{ho2020denoising, song2020score} and their subsequent extensions~\citep{song2020denoising, nichol2021improved, huang2023conditional} has significantly advanced the learning of complex probability distributions across various data types, including images~\citep{ho2022cascaded, rombach2022high, saharia2022photorealistic, ho2022classifier}, audio~\citep{kong2020diffwave}, and 3D biomedical imaging data~\citep{luo2021diffusion, poole2022dreamfusion, shi2023mvdream, pinaya2022brain}. 
For these generative models, the amount of training data plays a critical role in determining both the accuracy of probability estimation and the model’s ability to generalize, which enables effective extrapolation within the probability space. 

Data diversity and abundance are key to improving the generative capabilities of large-scale models, enabling them to capture intricate details within a vast probability space. 
However, many data-driven modeling tasks often rely on small, imbalanced real-world datasets, leading to poor generation quality, particularly for minority groups. 
For example, when training and fine-tuning a diffusion model with an imbalanced dataset of individuals, existing models often struggle to generate accurate images for those who appear less frequently (i.e., minorities) in the training data (\figref{fig:pogdiff_main}).
This challenge is further compounded when accuracy is prioritized over simply high resolution. 
For example, generated images of individuals need to match the identity of at least one individual in the training set (\figref{fig:pogdiff_main}). 
Addressing this gap is crucial for deploying diffusion models in real-world applications where correctness is paramount, such as personalized content generation or medical imaging. 

\begin{figure}
\centering
\includegraphics[width=0.99\linewidth]{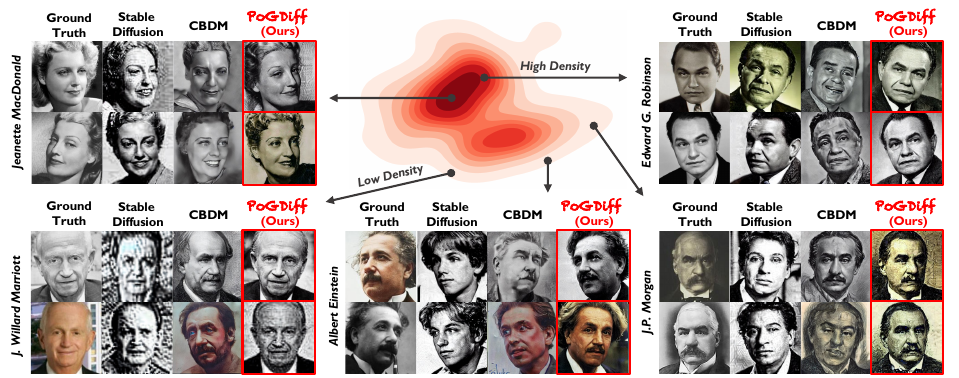}
\vspace{-2mm}   
\caption{\textbf{PoGDiff for imbalanced text-to-image generation.} Existing methods, e.g., Stable Diffusion~\citep{rombach2022high} and CBDM~\citep{qin2023class}, fall short for minority data (\textbf{Low Density}). In contrast, Our PoGDiff successfully generates high-quality images even for minority data, outperforming all baselines.}  
\label{fig:pogdiff_main}
\end{figure}
This limitation is true even for finetuning large diffusion models pretrained on large-scale datasets like LAION-5B~\citep{schuhmann2022laion}, e.g., Stable Diffusion~\citep{rombach2022high}. 
Imagine an imbalanced dataset consisting of employees in a small company, senior employees might have more photos available, while new employees only have a very limited number of them. 
Since none of the employees appear in the LAION-5B dataset, generating photos of them requires finetuning the Stable Diffusion model. 
Unfortunately, finetuning the model on such an imbalanced dataset might enable the model to generate accurate images for the majority group (i.e., senior employees), but it will perform poorly for the minority group (i.e., new employees). 

To address this challenge, we propose a general fine-tuning approach, dubbed PoGDiff. 
Rather than directly minimizing the KL divergence between the predicted and ground-truth distributions, PoGDiff replaces the ground-truth distribution with a Product of Gaussians (PoG), which is constructed by combining the original ground-truth targets with the predicted distribution conditioned on a neighboring text embedding. 
Our contributions are as follows:
\begin{itemize}[nosep]
\item We identify the problem of imbalanced text-to-image generation (IT2I) and introduce the first general diffusion model, dubbed Product-of-Gaussians Diffusion Models (PoGDiff), to address this problem. 

\item Our theoretical analysis shows that training of PoGDiff is equivalent to training a normal diffusion model while encouraging the model to generate the same image given similar text prompts (conditions). 

\item {We propose a new metric, ``Generative Recall'' (gRecall), which evaluates the generative diversity of a model when generation accuracy is strictly enforced.}

\item Our empirical results on real-world datasets demonstrate the effectiveness of our method, outperforming all state-of-the-art baselines.
\end{itemize}
\section{Related Work}
\textbf{Long-Tailed Recognition.} 
Addressing the challenges posed by long-tailed data distributions has been a critical area of research in machine learning, for both classification and regression problems. 
Traditional methods, such as re-sampling and re-weighting techniques, have been used to mitigate class imbalances by either over-sampling minority classes or assigning higher weights to them during training~\citep{chawla2002smote, he2009learning, torgo2013smote, branco2017smogn, branco2018rebagg}. 
Such algorithms fail to measure the distance in continuous label space and fail to handle high-dimensional data (e.g., images, and text). 
Deep imbalanced regression methods~\cite{yang2021delving,ren2022balanced,gong2022ranksim,keramati2023conr,wang2024variational} address this challenge by reweighting the data using the effective label density during representation learning. 
However, all methods above are designed for \emph{recognition} tasks such as classification and regression and are therefore not applicable to our \emph{generation} task. 

\textbf{Diffusion Models Related to Long-Tailed Data.} 
There are also works related to both diffusion models and long-tailed data. They aim at improving generation robustness using feature engineering~\citep{ruiz2023dreambooth,kumari2023multi,li2024photomaker}, feature augmentation~\citep{zhang2024long}, noisy label~\cite{na2024label}, improving fairness in image generation~\cite{shen2023finetuning,kim2024training,choi2020fair}, and improving classification accuracy using diffusion models~\cite{zhang2024long}. However, these works have different goals and, therefore, are not applicable to our setting.

Most relevant to our work is the Class Balancing Diffusion Model (CBDM)~\citep{qin2023class}, which uses a distribution adjustment regularizer that enhances tail-class generation based on the model’s predictions for the head class. 
It improves the quality of long-tailed generation by assuming one-hot conditional labels (i.e., classification-based settings). 
However, this assumption does not generalize to the modern setting where image generation is usually conditioned on free-form text prompts. 
As a result, when adapted to the free-form setting, they often fail to model the similarity among different text prompts, leading to suboptimal generation performance in minority data (as verified by empirical results in~\secref{sec:experiments}). 

\section{Methods}
\subsection{Preliminaries}
\textbf{Diffusion models (DMs)}~\citep{ho2020denoising} are probabilistic models that generate an output image $\x_0$ from a random noise vector $\x_T$ conditioned on text input $\mathbf{c}$. 
DMs operate through two main processes: the forward diffusion process and the reverse denoising process. 
During the diffusion process, Gaussian noise is progressively added to a data sample $\x_0$ over $T$ steps. The forward process is defined as a Markov chain, where:
\begin{align*}
    q \left( \x_t | \x_{t-1} \right) = \mathcal{N} \left( \x_t; \sqrt{1 - \beta_t} \x_{t-1}, \beta_t \I \right).
\end{align*}
Here, $\beta_t$ is the predefined diffusion rate at step $t$. By denoting $\alpha_t = 1 - \beta_t$ and $\Bar{\alpha}_t = \prod\nolimits_{i=1}^{t} \alpha_i$, we can describe the entire diffusion process as:
\begin{align*}
    q \left( \x_{1:T} | \x_{0} \right) = \prod\nolimits_{t=1}^{T} q \left( \x_t | \x_{t-1} \right), \quad 
    q \left( \x_t | \x_{0} \right) = \mathcal{N} \left( \x_t; \sqrt{\Bar{\alpha}_t} \x_{0}, (1 - \Bar{\alpha}_t) \I \right)
\end{align*}
The denoising process removes noise from the sample $\x_T$, eventually recovering $\x_0$.
A denoising model $\epsilon_{\theta} (\x_t, t, \y)$ is trained to estimate the noise $\epsilon$ from $\x_t$ and a text-guided embedding $\y = \phi (\mathbf{c})$, where $\phi (\cdot)$ is a pretrained text encoder. Formally:
\begin{align*}
    p_{\theta} \left( \x_{t-1} | \x_{t}, t, \y \right) &= \mathcal{N} \left( \x_{t-1}; \epsilon_{\theta} (\x_t, t, \y), \sigma_{t}^{2} \I \right).
\end{align*}
The denoising process is trained by maximizing the likelihood of the data under the model or, equivalently, by minimizing the variational lower bound on the negative log-likelihood of the data. 
~\citet{ho2020denoising} shows that this is equivalent to minimizing the KL divergence between the predicted distribution $p_{\theta} (\x_{t-1} | \x_{t}, \y)$ and the ground-truth distribution $q (\x_{t-1} | \x_{t}, \x_{0}, \y)$ at each time step $t$ during the backward process. The training objective then becomes:
\begin{align*}
    \min\nolimits D_{KL} \left( q \left( \x_{t-1} | \x_{t}, \x_{0}, \y \right) \big\| p_{\theta} \left( \x_{t-1} | \x_{t}, \y \right) \right).
\end{align*}
This can be simplified to {(i.e., Eqn. (1) in~\citet{rombach2022high})}:
\begin{align*}
    L_{DM} &= \mathbb{E}_{\x, \epsilon \sim \mathcal{N}(0,\I),t} \left[\| \epsilon - \epsilon_{\theta} (\x_t, t, \y) \|_{2}^{2}\right]. 
\end{align*}
\textbf{Latent diffusion models (LDMs)}~\citep{rombach2022high} are diffusion models that perform the entire diffusion and denoising process in a lower-dimensional latent space. LDMs first learn an encoder $\mathcal{E}$ and a decoder $\mathcal{D}$, which are then frozen during subsequent training of the diffusion models. The corresponding objective is then simplified to {(i.e., Eqn. (2) in~\citet{rombach2022high})}: 
\begin{align*}
    L_{LDM} = \mathbb{E}_{\mathcal{E}(\x), \epsilon \sim \mathcal{N}(0,\I),t} \left[\| \epsilon - \epsilon_{\theta} (\z_t, t, \y) \|_{2}^{2}\right]
\end{align*}
In this paper, we use Stable Diffusion (SD)~\citep{rombach2022high} as our backbone model. Since our method works for both the vanilla DMs and LDMs, for clarity, we use the notation $\x$ instead of $\z$, as the encoder $\mathcal{E}$ and decoder $\mathcal{D}$ are fixed during fine-tuning.

\subsection{Product-of-Gaussians Diffusion Models (PoGDiff)}

\subsubsection{Main Idea}
\textbf{Method Overview.}  
Given an image dataset $\mathcal{D} = \{ \x^{(i)}, \mathbf{c}^{(i)} \}_{i=1}^{N}$, where $\mathbf{c}^{(i)}$ is the text description for image $\x^{(i)}$, we use a fixed CLIP encoder to produce $\mathbf{c}^{(i)}$'s corresponding text embedding $\y=\phi(\mathbf{c})$. 

Typical diffusion models minimize the KL divergence between the predicted distribution $p_{\theta} (\x_{t-1} | \x_{t}, \y) = \mathcal{N} ({\x_{\theta}} (\x_t, t, \y), \lambda_{\y}^{-1} \I)$ and the ground-truth distribution $q (\x_{t-1} | \x_{t}, \x_{0}, \y) = \mathcal{N} ({\x_{t-1}}, \lambda_{t}^{-1} \I)$ at each time step $t$ during the backward denoising process. 
Here, $\lambda_{\y}$ and $\lambda_{t}$ represent the precision. 
In contrast, our PoGDiff replaces the ground-truth target with a Product of Gaussians (PoG) and instead minimizes the following KL divergence (for each $t$)
\begin{align}
     \mathcal{L}_{t-1}^{\textrm{PoGDiff}} = D_{KL} \big( q \left(\x_{t-1} | \x_{t}, \x_{0}, \y \right) \circ p_{\theta} \left(\x_{t-1} | \x_{t}, \y' \right) \big\| p_{\theta} \left(\x_{t-1} | \x_{t}, \y \right) \big), \label{eq:klpog}
\end{align}
\begin{wrapfigure}{t}{0.5\textwidth}
\centering
\vskip -0.1cm
\includegraphics[width=0.45\textwidth]{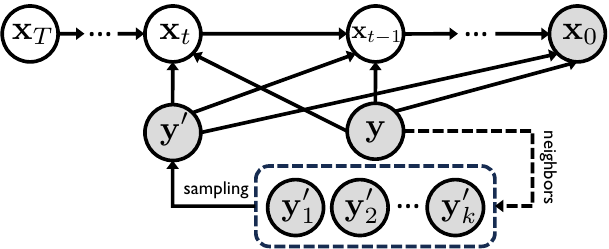}
\vskip -0.3cm
    \caption{Overview of our PoGDiff. During finetuning, PoGDiff collects $k$ neighbors of the current text embedding $\y$ and samples one $\y'$ from them based on~\eqnref{eq:KNN-weights}. Both $\y$ and $\y'$ will then be employed to denoise the current image $\x_t$ to $\x_{t-1}$.}
\vskip -0.6cm
\label{fig:pogdiff_gc}
\end{wrapfigure}
where $\circ$ represents the product of two Gaussian distributions, $\y'$ is a selected neighboring embedding from other samples in the training dataset (more details below), and $p_{\theta} (\x_{t-1} | \x_{t}, \y')$ denotes the predicted distribution when using $\y'$ as the input text embedding. 

As shown in~\figref{fig:pogdiff_gc}, intuitively, PoGDiff's denoising model $\epsilon_{\theta} (\x_t, t, \y)$ (or $p_{\theta} (\x_{t-1} | \x_{t}, \y)$) is optimized towards two target distributions, equivalently increasing the weights for minority instances (more details below). 
This enhances the text-to-image mapping by leveraging the statistical strength of neighboring data points, thereby improving and quality of the generated images, especially for minority images.

\textbf{Intuition behind the Product of Gaussians (PoG).} 
During fine-tuning, typical diffusion models ``lock'' the text conditional embedding $\y = \psi (c)$ to the corresponding image $\x$. 
Consequently, if the dataset follows a long-tailed distribution, the fine-tuned or post-trained diffusion model becomes heavily biased toward the majority data, performing poorly on minority data. 
\figref{fig:pogdiff_core} demonstrates our intuition. When training using a text-image pair $(\y,\x)$, our PoGDiff ``borrows'' information from neighboring text conditional embedding $\y'$, thereby effectively increasing the data density in the minority region and leading to smoother (less imbalanced) effective density, as shown in \figref{fig:pogdiff_core} (right). 
However, since the text embedding is fixed during fine-tuning (i.e., $\phi$ is frozen), directly smoothing the text embedding space is not feasible. Instead, we rely on the properties of PoG. 

By definition, given two Gaussian distributions, $\mathcal{N} (\mu_1, \lambda_{1}^{-1} \I)$ and $\mathcal{N} (\mu_2, \lambda_{2}^{-1} \I)$, their product is still a Gaussian distribution:
\begin{align}
    \mathcal{N} (\mu_1, \lambda_{1}^{-1}) \circ \mathcal{N} (\mu_2, \lambda_{2}^{-1}) = \mathcal{N} \left( \frac{\lambda_1 \mu_1 + \lambda_2 \mu_2}{\lambda_1 +\lambda_2}, (\lambda_1 + \lambda_2)^{-1} \right) \triangleq \mathcal{N} \left( \mu_{\textrm{PoG}}, \lambda_{\textrm{PoG}}^{-1} \right), \label{eq:pog_def}
\end{align}
which can be treated as a ``composition'' of two individual Gaussians, incorporating information from both. 
This intuition is key to developing our PoGDiff objective function.

\subsubsection{Theoretical Analysis and Algorithmic Design}
Based on \eqnref{eq:klpog}, we then derive a concrete objective function following~\prpref{prp:objective} below. 

\begin{proposition}\label{prp:objective}
Assume $\lambda_{\y} = \lambda_{\textrm{PoG}} \triangleq \lambda_t + \lambda_{\y'}$, we have our loss function
\begin{align}
    \mathcal{L}_{t-1}^{\textrm{PoGDiff}} = \mathbb{E}_{q} \left[ 
    \frac{\lambda_{\y}}{2} \left\| \mu_{\theta} (\x_{t}, \y) - \mu_{\textrm{PoG}} \right\|^2 \right] + C.
\end{align}
Here, $C$ is a constant, and $\mu_{\textrm{PoG}}$ denotes the mean of the PoG, with the expression defined in~\eqnref{eq:pog_def}. 
Then, through derivations based on Gaussian properties, we obtain
\begin{align}
    \mathcal{L}_{t-1}^{\textrm{PoGDiff}} 
    \leq \mathbb{E}_{q} \Big[ 
    \mathcal{A} (\lambda_t) \left\| \epsilon_{\theta} (\x_{t}, \y) - \epsilon \right\|^2 + \mathcal{A} (\lambda_{\y'}) \left\| \epsilon_{\theta} (\x_{t}, \y) - \epsilon_{\theta} (\x_{t}, \y') \right\|^2 \Big] + C \label{eq:obj_ori}
\end{align}
where the function $\mathcal{A} (\lambda) \triangleq \frac{\lambda (1 - \alpha_t)^2}{2 \alpha_t (1 - \Bar{\alpha}_t)}$.
\end{proposition}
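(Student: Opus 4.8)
The plan is to run the standard DDPM Gaussian-KL computation twice: once to reduce the PoGDiff KL to a weighted squared-distance between the predicted mean $\mu_\theta(\x_t,\y)$ and the PoG mean $\mu_{\textrm{PoG}}$, and once more to convert that $\x$-space objective into the $\epsilon$-space objective in \eqnref{eq:obj_ori}. First I would recall the closed form for the KL between two isotropic Gaussians: $D_{KL}(\mathcal N(\m_1,\lambda_1^{-1}\I)\|\mathcal N(\m_2,\lambda_2^{-1}\I))$ splits into a term depending only on the precisions (a trace term plus a $\log$-determinant term) and the quadratic term $\tfrac{\lambda_2}{2}\|\m_1-\m_2\|^2$. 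Applying this with $\m_1=\mu_{\textrm{PoG}}$, $\lambda_1=\lambda_{\textrm{PoG}}$, $\m_2=\mu_\theta(\x_t,\y)$, $\lambda_2=\lambda_\y$, and invoking the hypothesis $\lambda_\y=\lambda_{\textrm{PoG}}$, the precision-only terms collapse into the constant $C$ (the trace and log-det cancel because the two covariances coincide), leaving exactly $\mathbb E_q[\tfrac{\lambda_\y}{2}\|\mu_\theta(\x_t,\y)-\mu_{\textrm{PoG}}\|^2]+C$, which is the first displayed equation.

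Next I would expand $\mu_{\textrm{PoG}}$ using \eqnref{eq:pog_def}: with precisions $\lambda_t$ and $\lambda_{\y'}$ and means $\x_{t-1}$ (the ground-truth posterior mean) and $\mu_\theta(\x_t,\y')$ (the prediction at the neighbor), we have $\mu_{\textrm{PoG}}=\frac{\lambda_t \x_{t-1}+\lambda_{\y'}\mu_\theta(\x_t,\y')}{\lambda_t+\lambda_{\y'}}$, and since $\lambda_\y=\lambda_t+\lambda_{\y'}$ the weighted difference $\mu_\theta(\x_t,\y)-\mu_{\textrm{PoG}}$ can be written as $\frac{\lambda_t}{\lambda_\y}\big(\mu_\theta(\x_t,\y)-\x_{t-1}\big)+\frac{\lambda_{\y'}}{\lambda_\y}\big(\mu_\theta(\x_t,\y)-\mu_\theta(\x_t,\y')\big)$, i.e. $\mu_\theta(\x_t,\y)-\mu_{\textrm{PoG}}$ is a convex combination of the two residuals. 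Taking squared norms and applying Jensen / convexity of $\|\cdot\|^2$ (equivalently $\|a\m_1+b\m_2\|^2\le a\|\m_1\|^2+b\|\m_2\|^2$ for $a,b\ge 0$, $a+b=1$) bounds $\tfrac{\lambda_\y}{2}\|\mu_\theta(\x_t,\y)-\mu_{\textrm{PoG}}\|^2$ above by $\tfrac{\lambda_t}{2}\|\mu_\theta(\x_t,\y)-\x_{t-1}\|^2+\tfrac{\lambda_{\y'}}{2}\|\mu_\theta(\x_t,\y)-\mu_\theta(\x_t,\y')\|^2$. Finally I would substitute the DDPM reparameterizations $\x_{t-1}=\frac{1}{\sqrt{\alpha_t}}\big(\x_t-\frac{1-\alpha_t}{\sqrt{1-\bar\alpha_t}}\epsilon\big)$ and $\mu_\theta(\x_t,\y)=\frac{1}{\sqrt{\alpha_t}}\big(\x_t-\frac{1-\alpha_t}{\sqrt{1-\bar\alpha_t}}\epsilon_\theta(\x_t,\y)\big)$ (and likewise for $\y'$); the $\x_t$ terms cancel inside each difference, each $\mu$-difference becomes $\frac{1-\alpha_t}{\sqrt{\alpha_t}\sqrt{1-\bar\alpha_t}}$ times the corresponding $\epsilon$-difference, and squaring produces the prefactor $\frac{(1-\alpha_t)^2}{\alpha_t(1-\bar\alpha_t)}$; folding in the $\lambda/2$ gives precisely $\mathcal A(\lambda)=\frac{\lambda(1-\alpha_t)^2}{2\alpha_t(1-\bar\alpha_t)}$ in front of each term, yielding \eqnref{eq:obj_ori}.

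The one genuinely delicate point — and the step I expect to be the main obstacle to state cleanly — is the convexity bound, because it is where the equality "$=$" in the first display is replaced by the inequality "$\le$" in \eqnref{eq:obj_ori}; one must check that the coefficients $\lambda_t/\lambda_\y$ and $\lambda_{\y'}/\lambda_\y$ are nonnegative and sum to one (immediate from $\lambda_\y=\lambda_t+\lambda_{\y'}$ and positivity of precisions) so that Jensen applies, and track that the $\lambda_\y$ in front combines with these weights to produce clean $\lambda_t$ and $\lambda_{\y'}$ coefficients rather than leftover ratios. Everything else is the routine Gaussian-KL and DDPM bookkeeping already used in \citet{ho2020denoising}; the constant $C$ absorbs all precision-only and $\epsilon$-independent contributions, and the expectation $\mathbb E_q$ is over the forward process $q(\x_t,\x_0)$ together with $\epsilon\sim\mathcal N(0,\I)$ and the sampled neighbor $\y'$.
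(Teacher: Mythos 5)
Your proposal is correct and follows essentially the same route as the paper: reduce the KL to $\tfrac{\lambda_{\y}}{2}\|\mu_{\theta}-\mu_{\textrm{PoG}}\|^2$ using equal covariances, bound it by the two weighted mean-residuals, and convert to $\epsilon$-space via the DDPM reparameterization to obtain $\mathcal{A}(\lambda)$. Your Jensen/convexity step on the convex combination $\mu_{\theta}(\y)-\mu_{\textrm{PoG}}=\tfrac{\lambda_t}{\lambda_{\y}}(\mu_{\theta}(\y)-\mu_t)+\tfrac{\lambda_{\y'}}{\lambda_{\y}}(\mu_{\theta}(\y)-\mu_{\theta}(\y'))$ is exactly equivalent to the paper's Lemma~\ref{lm:pog_ineq}, which completes the square explicitly and exhibits the nonnegative gap $\tfrac{\lambda_t\lambda_{\y'}}{2(\lambda_t+\lambda_{\y'})}(\mu_t-\mu_{\theta}(\y'))^2$ (precisely your Jensen slack).
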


The proof is available in ~\appref{app:proof}. \eqnref{eq:obj_ori} in \prpref{prp:objective} provides a upper bound for the KL divergence (\eqnref{eq:klpog}) we aim to minimize. 

In diffusion model literature~\citep{ho2020denoising, rombach2022high}, one typically sets $\mathcal{A} (\lambda_t) = 1$ to eliminate the dependency on the time step $t$, and thus~\eqnref{eq:obj_ori} can be written as\footnote{For clarification, our $\mathcal{A} (\lambda_t)$ is equivalent to $\lambda_t$ in~\citep{ho2020denoising}, with the difference that in our paper, $\lambda$ refers to the precision of the Gaussian distribution.}:
\begin{align}
    \mathcal{L}_{\textrm{simple}}^{\textrm{PoGDiff}} = \mathbb{E}_{\x \sim q(\x_0), \epsilon\sim\mathcal{N}(\0,\I), t \sim \mathcal{U}(1,T)} \left[ 
    \left\| \epsilon_{\theta} (\x_{t}, \y) - \epsilon \right\|^2 + \frac{\lambda_{\y'}}{\lambda_t} \left\| \epsilon_{\theta} (\x_{t}, \y) - \epsilon_{\theta} (\x_{t}, \y') \right\|^2 \right]. 
\end{align}
For convenience, we rewrite $\tfrac{\lambda_{\y'}}{\lambda_t}=\tfrac{\sigma_{t}^{2}}{\sigma_{\y'}^{2}}$. 
Note that this weight still depends on the time step $t$. 
{Therefore, to be consistent with the literature~\citep{ho2020denoising, rombach2022high}, we hypothetically define $\sigma_{\y'}^{2} = \tfrac{\sigma_{t}^{2}}{\psi \left[ \left( \x, \y \right), \left( \x', \y' \right) \right]}$ to cancel out the term $\sigma_{t}^{2}$, thereby effectively removing the time step dependency;} here $\psi \left[ \left( \x, \y \right), \left( \x', \y' \right) \right]$ denotes the similarity between the two image-text pairs. 
By shortening the notation $\psi \left[ \left( \x, \y \right), \left( \x', \y' \right) \right]$ to $\psi$, we can further rewrite the objective function for PoGDiff as:
\begin{align}
    \mathcal{L}_{\textrm{simple}}^{\textrm{PoGDiff}} = \mathbb{E}_{\x \sim q(\x_0), \epsilon\sim\mathcal{N}(\0,\I), t \sim \mathcal{U}(1,T)} \Big[ 
    \left\| \epsilon_{\theta} (\x_{t}, \y) - \epsilon \right\|^2 + \psi \left\| \epsilon_{\theta} (\x_{t}, \y) - \epsilon_{\theta} (\x_{t}, \y') \right\|^2 \Big]. \label{eq:pog_simple}
\end{align}
\subsubsection{Computing the Similarity \texorpdfstring{$\psi$}{}} 
Next, we discuss the choice of $\psi$ in~\eqnref{eq:pog_simple}. 
Given an image-text dataset $\mathcal{D}$, the similarities between each image-text pair need to be considered in two parts:
\begin{wrapfigure}{t}{0.5\textwidth}
\centering
\vskip -0.1cm
\includegraphics[width=0.45\textwidth]{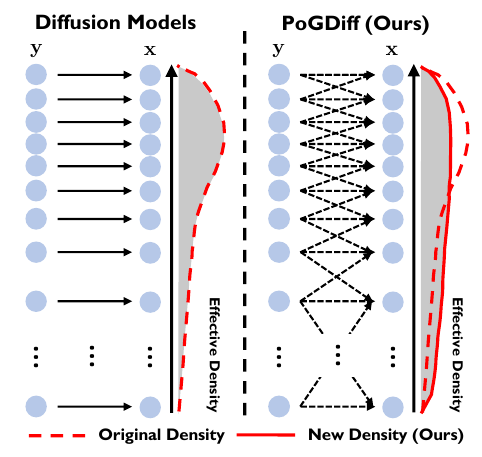}
\vskip -0.3cm
\caption{Comparing denoising networks of typical diffusion models~\citep{ho2020denoising,rombach2022high} and our PoGDiff. \textbf{Left:} In {conditional text-to-image diffusion models}, a data point (i.e., $\x$) is mainly affected by its text embedding ({besides random latent codes}). \textbf{Right:} In PoGDiff, neighbors participate to modulate the final effective density. {Here, $\y$ denotes the text prompts, which are the embeddings of the text descriptions of the images; $\x$ denotes the associated images. The tightly packed circles at the top indicate higher density, while the sparse circles indicate lower density.}}
\vskip -0.7cm
\label{fig:pogdiff_core}
\end{wrapfigure}
\begin{align}
    \psi \triangleq \psi_{\textrm{img-sim}} \left( \x, \x' \right) \cdot \psi_{\textrm{inv-txt-den}} \left( \y \right), \label{eq:psi}
\end{align}
where $\psi_{\textrm{img-sim}}( \x, \x' )$ is the similarity between images $\x$ and $\x'$, and $\psi_{\textrm{inv-txt-den}} ( \y )$ is the probability density of the text embedding $\y$ (more details below). 

\textbf{Image Similarity $\psi_\textrm{img-sim}$.} 
For all $\x \sim \mathcal{D}$, we apply a pre-trained image encoder to obtain the latent representations $\z$. We then calculate the cosine similarities between each $\z$ and select the $k$ nearest neighbors with the highest similarity values for all samples in the dataset $\mathcal{D}$, denoted as $\left[ s_{j} \right]_{j=1}^{k}$, where $s_{j}$ represents the cosine similarity scores between $\x$ and other images in $\mathcal{D}$, sorted in descending order. These values are then normalized to produce the weights for each neighbor:  
\begin{align}
    w_{j} = \frac{s_{j}}{\sum_{j} s_{j}}. \label{eq:KNN-weights}
\end{align}
For each data pair $(\x, \y)$, we then randomly sample a neighboring pair $(\x', \y')$ through from a categorical distribution $Cat([w_j]_{j=1}^k)$ (``Cat'' is short for ``Categorical''),
i.e., with $w_{j}$ serving as the probability weight, and compute their image similarity as:
\begin{align}
    \psi_{\textrm{img-sim}} \left( \x, \x' \right) \triangleq \max \left( 0, s^{a_1 + a_2 \cdot \mathbbm{1} \left[ \mathcal{I} (\x) \neq \mathcal{I} (\x') \right]} \right), \label{eq:img-sim}
\end{align}
where {$s\in\{s_j\}_{j=1}^k$ denotes the cosine similarity sampled according to the weights $\{w_j\}_{j=1}^k$ in~\eqnref{eq:KNN-weights}},  $\mathbbm{1} \left[ \cdot \right]$ denotes the indicator function, and $\mathcal{I} (\cdot)$ retrieves the class/identity of the current input image; 
$\mathbbm{1} \left[ \mathcal{I} (\x) \neq \mathcal{I} (\x') \right]=0$ if $\x$ and $\x'$ are two photos of the same person (e.g., Albert Einstein), and $\mathbbm{1} \left[ \mathcal{I} (\x) \neq \mathcal{I} (\x') \right]=1$ if $\x$ and $\x'$ are photos of two different persons (e.g., $\x$ is Einstein and $\x'$ Reagan). {More details on $\mathcal{I}(\cdot)$ can be found in~\appref{app:psi_var}.}
$a_1, a_2$ are hyperparameters that control the scale of the similarities. 
For example, if the cosine similarity ($s$) between $x$ and $x'$ is 0.4, and $a_1=a_2=1$: if $x$ and $x'$ are of the same person, the image similarity will be $0.4^{1}$, whereas if $x$ and $x'$ are not of the same person, the image similarity will be $0.4^{2}$, which is smaller. 
The intuition is to compute the image similarity according to both the image content similarity, i.e., $s$, and identity similarity, i.e., $\mathcal{I} (\x)$ and $\mathcal{I} (\x')$. 
\begin{algorithm}[t]
\caption{Training Algorithm for PoGDiff}\label{alg:PoGDiff}
\begin{algorithmic}[1]
\STATE {\bfseries Inputs:} A dataset $\mathcal{D} = \{ \x^{(i)}, \mathbf{c}^{(i)} \}_{i=1}^{N}$.\\
\STATE {\bfseries repeat}\\
\STATE {\quad $(\x_0, \mathbf{c}) \sim \mathcal{D}$}\\
\STATE {\quad $\y = \phi (\mathbf{c})$}\\
\STATE {\quad $t \sim \textrm{Uniform} (1, \cdots, T)$}\\
\STATE {\quad $\epsilon \sim \mathcal{N} (\0, \I)$}\\
\STATE {\quad Generate $\y'$ and $\psi$ from~\eqnref{eq:psi_final}}\\
\STATE {\quad Calculate $\x_t = \sqrt{\Bar{\alpha}_t} \x_0 + \sqrt{1 - \Bar{\alpha}_t} \epsilon$}\\
\STATE {\quad Take gradient descent step on}\\
\STATE {\small \quad \quad $\nabla_{\theta} \left[ \| \epsilon - \epsilon_{\theta} (\x_t, \y) \|_{2}^{2} + \psi \| \epsilon_{\theta} (\x_t, \y') - \epsilon_{\theta} (\x_t, \y) \|_{2}^{2} \right]$}\\
\STATE {\bfseries until} converged
\end{algorithmic}
\end{algorithm}

\textbf{Inverse Text Densities $\psi_\textrm{inv-txt-den}$.} 
Inspired by LDS in DIR~\citep{yang2021delving} and the theoretical analysis in VIR~\citep{wang2024variational}, re-weighting the label distribution of an imbalanced dataset can increase the optimization scale for minority classes and reduce the emphasis on majority classes, resulting in better performance under imbalanced conditions. 
However, both DIR and VIR partition the label space into bins, treating it as a classification problem. 
This is \emph{not applicable} to our setting because in text-to-image generation, the ``label'' is actually text embeddings. 
Instead, we train a variational autoencoder (VAE) on this dataset and then approximate its likelihood $p(\y)$ through its evidence lower bound, or ELBO:
\begin{align}
    p(\y) = e^{\log p (\y)} \approx e^{\textrm{ELBO}_{\textrm{VAE}} (\y)} .
\end{align}
The evidence for minority data will be lower than for majority classes. 
This then motivates our inverse text densities defined as follows:
\begin{align}
    \psi_{\textrm{inv-txt-den}} \left( \y \right) \triangleq \frac{1}{a_3} e^{-\textrm{ELBO}_{\textrm{VAE}} (\y)}, \label{eq:inv-txt-den}
\end{align}
where $a_3$ is a hyperparameter that controls the scale of the inverse text densities. 
By combining~\eqnref{eq:img-sim} and~\eqnref{eq:inv-txt-den} to~\eqnref{eq:psi}, we can then compute $\psi$ as follows:
\begin{align}
    \psi = \max \Big( 0, \frac{s^{a_1 + a_2 \cdot \mathbbm{1} \left[ \mathcal{I} (\x) \neq \mathcal{I} (\x') \right]}}{a_3}  \Big) \cdot e^{-\textrm{ELBO}_{\textrm{VAE}} (\y)}. \label{eq:psi_final}
\end{align}
\subsubsection{Final Objective Function}
By collecting all the components discussed above, we arrive at our final training objective:
\begin{align}
    \mathcal{L}_{\textrm{final}}^{\textrm{PoGDiff}} &= \mathbb{E}_{\x \sim q(\x_0), \epsilon\sim\mathcal{N}(\0,\I), t \sim \mathcal{U}(1,T)} \Big[ 
    \left\| \epsilon_{\theta} (\x_{t}, \y) - \epsilon \right\|^2 + \psi \left\| \epsilon_{\theta} (\x_{t}, \y) - \epsilon_{\theta} (\x_{t}, \y') \right\|^2 \Big], 
\end{align}
where $\psi$ is defined in~\eqnref{eq:psi_final}. \algref{alg:PoGDiff} summarizes our algorithm.

\section{Experiments}\label{sec:experiments}
\subsection{Experimental Setup}\label{sec:setup}
{\textbf{Datasets.} To demonstrate the effectiveness of PoGDiff in terms of both accuracy and quality, we evaluate our method on four widely used imbalanced datasets, i.e., AgeDB~\citep{moschoglou2017agedb}, DigiFace~\citep{bae2023digiface}, VGGFace2~\citep{cao2018vggface2} and CIFAR-100-LT~\citep{cao2019learning}.} 


{\emph{AgeDB-IT2I-L \& AgeDB-IT2I-S:} AgeDB-IT2I-L(arge) is constructed from the AgeDB dataset~\citep{moschoglou2017agedb}. 
It consists of $976$ images across $223$ identities, with each majority class containing $30$ images and each minority class containing $2$ images. 
We also construct \emph{AgeDB-IT2I-S}(mall), which contains $32$ images across $2$ identities, where each majority class consists of $30$ images and each minority class consists of $2$ images. 
Additionally, we construct \emph{AgeDB-IT2I-M}(edium), and more details can be found in~\appref{app:datasets}. }

{\emph{DigiFace-IT2I:} DigiFace-IT2I is derived from the DigiFace dataset~\citep{bae2023digiface}. It contains $985$ images across $179$ identities, where each majority class consists of $30$ images and each minority class consists of $2$ images. 
We use a process similar to AgeDB-IT2I to collect text-image pairs, forming this DigiFace-IT2I dataset. }

{\emph{VGGFace-IT2I:} VGGFace-IT2I is a subset of VGGFace2~\citep{cao2018vggface2}. 
It contains $1933$ images across $193$ identities, where each majority class consists of $49$ images and each minority class consists of $2$ images. }

{\emph{CIFAR-100-IT2I:} CIFAR-100-IT2I is constructed from CIFAR-100-LT dataset~\citep{cao2019learning}. 
We set the imbalance ratio to 250, where the largest class contains $500$ images and the smallest class contains only $2$ images. 
The dataset consists of $9502$ images in total. 
During fine-tuning, to mitigate the dominance of majority classes, we downsample these classes in each epoch. Specifically, the most frequent class is reduced to $49$ images per epoch out of its original $500$ images.}

\begin{table}[t]
\begin{adjustbox}{valign=t}
\centering
\begin{minipage}{0.49\textwidth}
\centering
\setlength{\tabcolsep}{2pt}
\caption{\textbf{Performance based on FID score.}}
\vspace{-0.5cm}
\label{table:total-fid}
\small
\begin{center}
\resizebox{\textwidth}{!}{
\begin{tabular}{l|cc|cc|cc|cc|cc}
\toprule[1.5pt]
Datasets (-IT2I) & \multicolumn{4}{c|}{AgeDB} & \multicolumn{2}{c|}{DigiFace} & \multicolumn{2}{c|}{{VGGFace}} & \multicolumn{2}{c}{{CIFAR-100}} \\ \midrule
Size & \multicolumn{2}{c|}{Small} & \multicolumn{2}{c|}{Large} & \multicolumn{2}{c|}{Large} & \multicolumn{2}{c|}{Large} & \multicolumn{2}{c}{Large} \\ \midrule
Metric & \multicolumn{10}{c}{FID~$\downarrow$} \\ \midrule
Shot & All & Few & All & Few & All & Few & All & Few & All & Few \\ \midrule
\textsc{Vanilla} & 14.88 & 13.72 & 7.67 & 11.67 & 7.18 & 12.23 & 7.59 & 12.08 & 7.19 & 11.46 \\[1.5pt]
\textsc{CBDM} & 14.72 & 14.13 & 7.18 & 11.12 & 6.96 & 12.72 & 7.23 & 11.91 & 7.26 & 11.94 \\[1.5pt]
{\textsc{T2H}} & 14.85 & 13.66 & 7.61 & 11.64 & 7.14 & 12.22 & 7.34 & 12.02 & 7.10 & 11.39 \\[1.5pt]
\textsc{PoGDiff (Ours)} & \textbf{14.15} & \textbf{12.88} & \textbf{6.03} & \textbf{10.16} & \textbf{6.84} & \textbf{11.21} & \textbf{6.29} & \textbf{10.97} & \textbf{6.24} & \textbf{9.41} \\[1.5pt]
\bottomrule[1.5pt]
\end{tabular}}
\end{center}
\vspace{-0.5cm}
\end{minipage}
\end{adjustbox}
\hfill
\begin{adjustbox}{valign=t}
\centering
\begin{minipage}{0.49\textwidth}
\centering
\setlength{\tabcolsep}{2pt}
\caption{\textbf{Performance based on DINO score.}}
\vspace{-0.5cm}
\label{table:total-dino} 
\small
\begin{center}
\resizebox{\textwidth}{!}{
\begin{tabular}{l|cc|cc|cc|cc|cc}
\toprule[1.5pt]
Datasets (-IT2I) & \multicolumn{4}{c|}{AgeDB} & \multicolumn{2}{c|}{DigiFace} & \multicolumn{2}{c|}{{VGGFace}} & \multicolumn{2}{c}{{CIFAR-100}} \\ \midrule
Size & \multicolumn{2}{c|}{Small} & \multicolumn{2}{c|}{Large} & \multicolumn{2}{c|}{Large} & \multicolumn{2}{c|}{Large} & \multicolumn{2}{c}{Large} \\ \midrule
Metric & \multicolumn{10}{c}{DINO~$\uparrow$} \\ \midrule
Shot & All & Few & All & Few & All & Few & All & Few & All & Few \\ \midrule
\textsc{Vanilla} & 0.42 & 0.37 & 0.34 & 0.25 & 0.42 & 0.36 & 0.41 & 0.29 & 0.48 & 0.32 \\[1.5pt]
\textsc{CBDM} & 0.54 & 0.09 & 0.41 & 0.26 & 0.34 & 0.16 & 0.46 & 0.22 & 0.52 & 0.28 \\[1.5pt]
{\textsc{T2H}} & 0.43 & 0.39 & 0.37 & 0.26 & 0.44 & 0.36 & 0.42 & 0.28 & 0.45 & 0.30 \\[1.5pt]
\textsc{PoGDiff (Ours)} & \textbf{0.77} & \textbf{0.73} & \textbf{0.66} & \textbf{0.52} & \textbf{0.64} & \textbf{0.49} & \textbf{0.69} & \textbf{0.55} & \textbf{0.73} & \textbf{0.61} \\[1.5pt]
\bottomrule[1.5pt]
\end{tabular}}
\end{center}
\vspace{-0.5cm}
\end{minipage}
\end{adjustbox}
\end{table} 
\begin{table}[t]
\begin{adjustbox}{valign=t}
\centering
\begin{minipage}{0.49\textwidth}
\centering
\setlength{\tabcolsep}{2pt}
\caption{\textbf{Performance based on human evaluation.} The evaluation is a binary decision: the image is either judged as representing the same individual (score 1.0) or not (score 0.0).}
\vspace{-0.6cm}
\label{table:total-human}   
\small
\begin{center}
\resizebox{\textwidth}{!}{
\begin{tabular}{l|cc|cc|cc|cc}
\toprule[1.5pt]
Datasets (-IT2I) & \multicolumn{4}{c|}{AgeDB} & \multicolumn{2}{c|}{{VGGFace}} & \multicolumn{2}{c}{{CIFAR-100}} \\ \midrule
Size & \multicolumn{2}{c|}{Small} & \multicolumn{2}{c|}{Large} & \multicolumn{2}{c|}{Large} & \multicolumn{2}{c}{Large} \\ \midrule
Metric & \multicolumn{8}{c}{Human Score~$\uparrow$} \\ \midrule
Shot & All & Few & All & Few & All & Few & All & Few \\ \midrule
\textsc{Vanilla} & 0.50 & 0.00 & 0.60 & 0.20 & 0.62 & 0.16 & 0.72 & 0.30 \\[1.5pt]
\textsc{CBDM} & 0.50 & 0.00 & 0.56 & 0.12 & 0.54 & 0.10 & 0.63 & 0.24 \\[1.5pt]
{\textsc{T2H}} & 0.50 & 0.00 & 0.60 & 0.20 & 0.62 & 0.16 & 0.72 & 0.30 \\[1.5pt]
\textsc{PoGDiff (Ours)} & \textbf{1.00} & \textbf{1.00} & \textbf{0.84} & \textbf{0.68} & \textbf{0.78} & \textbf{0.64} & \textbf{0.84} & \textbf{0.72} \\[1.5pt]
\bottomrule[1.5pt]
\end{tabular}}
\end{center}
\vspace{-0.8cm}
\end{minipage}
\end{adjustbox}
\hfill
\begin{adjustbox}{valign=t}
\centering
\begin{minipage}{0.49\textwidth}
\centering
\setlength{\tabcolsep}{2pt}
\caption{\textbf{Performance on AgeDB-IT2I based on GPT-4o evaluation.} The scores are from 0 to 10, with higher scores indicating the individual resembles the well-known person.}
\vspace{-0.6cm}
\label{table:total-gpt} 
\small
\begin{center}
\resizebox{\textwidth}{!}{
\begin{tabular}{l|cc|cc|cc|cc}
\toprule[1.5pt]
Datasets (-IT2I) & \multicolumn{4}{c|}{AgeDB} & \multicolumn{2}{c|}{{VGGFace}} & \multicolumn{2}{c}{{CIFAR-100}} \\ \midrule
Size & \multicolumn{2}{c|}{Small} & \multicolumn{2}{c|}{Large} & \multicolumn{2}{c|}{Large} & \multicolumn{2}{c}{Large} \\ \midrule
Metric & \multicolumn{8}{c}{Human Score~$\uparrow$} \\ \midrule
Shot & All & Few & All & Few & All & Few & All & Few \\ \midrule
\textsc{Vanilla} & 5.20 & 3.20 & 4.90 & 3.60 & 4.50 & 2.90 & 6.00 & 3.20 \\[1.5pt]
\textsc{CBDM} & 4.50 & 1.10 & 3.10 & 1.70 & 2.80 & 1.30 & 3.40 & 2.00 \\[1.5pt]
{\textsc{T2H}} & 5.50 & 3.10 & 4.70 & 3.90 & 4.60 & 3.10 & 6.20 & 3.60 \\[1.5pt]
\textsc{PoGDiff (Ours)} & \textbf{9.10} & \textbf{8.40} & \textbf{8.50} & \textbf{8.00} & \textbf{8.20} & \textbf{7.60} & \textbf{8.40} & \textbf{8.00} \\[1.5pt]
\bottomrule[1.5pt]
\end{tabular}}
\end{center}
\vspace{-0.8cm}
\end{minipage}
\end{adjustbox}
\end{table} 

{\textbf{Baselines.} 
We employ \textbf{Stable Diffusion v1.5}~\citep{rombach2022high} as the backbone diffusion model. 
As this is the first work to explore imbalanced text-to-image (IT2I) diffusion models with \textbf{natural text prompts}, we adapt the current state-of-the-art methods designed for long-tailed T2I diffusion models \textbf{with one-hot text prompts} to serve as baselines. The baselines are described below:
\begin{itemize}[nosep,leftmargin=15pt]
\item \textbf{Vanilla:} The \emph{Vanilla} model simply fine-tunes a Stable Diffusion model without additional modifications.

\item \textbf{CBDM:} {\emph{CBDM}~\citep{qin2023class} is a Class Balancing Diffusion Model that incorporates a distribution adjustment regularizer during training. 
During fine-tuning,} we sample an additional text embedding $\y'$ from the entire fine-tuning dataset and apply the CBDM objective function. All hyperparameters are kept the same as in the original paper, with further details available in~\citet{qin2023class}. 

\item {\textbf{T2H:} \emph{T2H}~\citep{zhang2024long} is a Long-Tailed Diffusion Model with Oriented Calibration. 
It is a feature augmentation method, but is not directly applicable to our setting. 
Specifically, T2H relies on the class frequency, which is not available in our experiments. We adapted this method to our settings by using the density for each text prompt embedding to serve as the class frequency in T2H~\citep{zhang2024long}.}
\end{itemize}}

\textbf{Evaluation Protocols and Metrics.}
We use {three types of evaluation metrics: \textbf{generation diversity}}, \textbf{generation accuracy}, and \textbf{generation quality}. 

{For \textbf{generation diversity}, we propose a new metric, generative recall (gRecall), which evaluates the generative diversity of a
model when generation accuracy is strictly enforced. 
\begin{itemize}[nosep,leftmargin=15pt]
\item \textbf{gRecall in the Context of Image Generation: ``Correct Image'' and ``Covered Image''.} For each generated image, we classify it as a ``correct image'' if its distance to at least one ground-truth (GT) image is below a predefined threshold. For instance, suppose we have two training-set images for Einstein, denoted as $x_1$ and $x_2$. A generated image $x_g$ is a ``correct image'' if the cosine similarity between $x_g$ and either $x_1$ or $x_2$ is above some threshold (e.g., we set to $0.7$ here). For example, if the cosine similarity $x_g$ and $x_1$ is larger than $0.7$, we say that $x_g$ is a ``correct image'', and that $x_1$ is a ``covered image''. Intuitively, a training-set image (e.g., $x_1$) is covered if a diffusion model is capable of generating a similar image.
    
\item \textbf{Cosine Similarity between Images.} Note that in practice, we compute the cosine similarity between DINO embeddings of images rather than raw pixels.

\item \textbf{Formal Definition for gRecall.} Formally, for each model, we compute the \textbf{gRecall} per ID as follows:
    \begin{align*}
        {\text{\small gRecall}} = \frac{1}{c} \sum_{i=1}^{c} \frac{ \text{ \small $\#$ of unique covered images for ID i}}{\text{\small $\#$ of images for ID i in the training set}}
    \end{align*}
    where $c$ is the number of IDs in a training set. 

\item \textbf{Analysis.} This metric evaluates the generational diversity of a model. 
For example, if the training dataset contains two distinct images of Einstein, $x_1$ and $x_2$, and a model generates only images resembling $x_1$, the gRecall, in this case, would be $0.5$. 
While the model may achieve high accuracy in terms of facial identity (~\tabref{table:total-human} and ~\tabref{table:total-gpt}), it falls short in diversity because it fails to generate images resembling $x_2$. 
In contrast, if a model generates images that cover both $x_1$ and $x_2$, the gRecall for this ID will be $1$; for instance, if the model generates 10 images for Einstein, where 6 of them resemble $x_1$, and 4 of them resemble $x_2$, the gRecall would be $1$, indicating high diversity and coverage.
\end{itemize}

To assess \textbf{generation accuracy}, we use 10 different seeds to sample 10 images for each minority class. We then gather feedback from both the GPT-4o model~\citep{achiam2023gpt} and human evaluators to score the accuracy of identity recognition. 
Additionally, we employ a pre-trained DINO model~\citep{caron2021emerging} for calculating the DINO score for image similarities. More details about the evaluation process, including prompts we used, are in ~\appref{app:eval}.

\begin{figure*}[t]
\centering
\vskip -0.3cm
\includegraphics[width=0.99\textwidth]{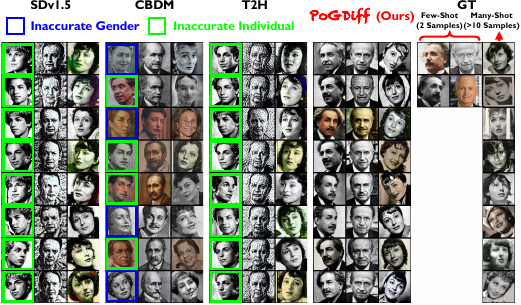}
\vskip -0.3cm
\caption{{Example generated images from different methods. Our PoGDiff outperforms the baselines in both generation accuracy and quality. Regarding the ground truth (GT), the training set for the minority class (left two columns) contains only $2$ images per individual while the majority class has more than $8$ samples per individual.}}
\vskip -0.4cm
\label{fig:pogdiff_img_comp}
\end{figure*}

{For general text-to-image \textbf{generation quality}, we report the widely used Fréchet Inception Distance (FID) score~\citep{heusel2017gans}. 
For all the facial datasets, we use a pre-trained face recognition model as the feature extractor rather than traditional Inception-v3~\citep{szegedy2016rethinking}; since our goal is to evaluate the ability to recognize humans, we need to capture facial features rather than general features.
For CIFAR-100-LT, we employ the original feature extractor (i.e., Inception-v3~\citep{szegedy2016rethinking}). More details are in~\appref{app:eval}.}

\subsection{Results}  

\textbf{Generation Quality and Accuracy.} 
We report the performance of different methods in terms of FID score, human evaluation score, GPT-4o score, and DINO score in~\tabref{table:total-fid}, ~\tabref{table:total-dino}, ~\tabref{table:total-human} and ~\tabref{table:total-gpt}, respectively\footnote{{CLIP score is not applicable here. Our text prompts are predominantly human names, while CLIP is primarily trained on common objects, not human names; therefore, CLIP score cannot measure the matching between images and human names. }}. 
{Full results are included in~\appref{app:res}.} 
Across all tables, we observe that our PoGDiff consistently outperforms all baselines. 
More results and discussions are in ~\appref{app:fid_limit}. 
Notably, PoGDiff demonstrates significant improvements, especially in few-shot scenarios (i.e., for minority classes). 
It is also worth noting that CBDM~\citep{qin2023class} performs extremely poorly on AgeDB-IT2I-S dataset. 
This is because their method samples text conditions from the entire space, which may work in one-hot class settings, but in our context (natural text conditions), this sampling technique misguides the model during training. 

{\figref{fig:pogdiff_img_comp} shows randomly sampled generated images on low-density classes (Column $1 \& 2$) and high-density class (Column $3$) in AgeDB-IT2I-L for each method. 
Note that the ground-truth (GT) images are the training images. 
For the high-density class, we select $8$ out of $24$ total images in the training set to report in this figure. 
Across each column, the individual names are Albert Einstein, JW Marriott, and Luise Rainer, respectively. 
PoGDiff achieves significantly better accuracy and quality for both head and tail classes (see~\appref{app:diss} for more comparisons and analysis). 
Specifically, both SDv1.5 and T2H fail to generate accurate individuals (\green{green} boxes), and CBDM even struggles to generate images with correct genders (\blue{blue} boxes). 
By contrast, our PoGDiff successfully generates accurate individuals, even when trained on a dataset containing only two images.}

\textbf{Generation Diversity.} \tabref{table:total-fid} and \figref{fig:pogdiff_img_comp} demonstrate our PoGDiff's promising generation diversity:
\begin{itemize}[nosep,leftmargin=12pt]
    
\item \textbf{PoGDiff's Superior FID Performance.} \tabref{table:total-fid} shows that PoGDiff achieves a lower FID score, particularly in few-shot regions (i.e., minorities). 
This suggests that the images generated by our method capture a broader range of variations present in the training dataset, such as \textbf{backgrounds or facial angles}.

\item \textbf{PoGDiff's Qualitative Results.} As shown in~\figref{fig:pogdiff_img_comp}:
\begin{itemize}[nosep,leftmargin=8pt]
    \item For Einstein (Column 1 for each method), the training dataset (the GT section on the right) includes two face angles and two hairstyles. Our generated results successfully cover these attributes.
    \item For JW Marriott (Column 2 for each method), the training dataset has only one face angle. Correspondingly our results focus on generating subtle variations in facial expressions with only one angle, \textbf{as expected}.
    \item For the majority group (Column 3 for each method), our PoGDiff's generated images cover a wider range of diversity while maintaining ID consistency.
\end{itemize}
\end{itemize}
\begin{wraptable}{R}{0.6\textwidth}
\setlength{\tabcolsep}{2.5pt}
\vspace{-0.6cm}
\caption{{\textbf{Performance based on gRecall score.} See the detailed definition of gRecall in~\secref{sec:setup}.}}
\vspace{-0.8pt}
\label{table:total-recall}   
\small
\begin{center}
\resizebox{0.99\linewidth}{!}{
\begin{tabular}{l|cc|cc|cc|cc}
\toprule[1.5pt]
Datasets (-IT2I) & \multicolumn{4}{c|}{AgeDB} & \multicolumn{2}{c|}{{VGGFace}} & \multicolumn{2}{c}{{CIFAR-100}} \\ \midrule
Size & \multicolumn{2}{c|}{Small} & \multicolumn{2}{c|}{Large} & \multicolumn{2}{c|}{Large} & \multicolumn{2}{c}{Large} \\ \midrule
Metric & \multicolumn{8}{c}{gRecall Score~$\uparrow$} \\ \midrule
Shot & All & Few & All & Few & All & Few & All & Few \\ \midrule
\textsc{Vanilla} & 0.017 & 0.000 & 0.196 & 0.200 & 0.133 & 0.167 & 0.200 & 0.160 \\[1.5pt]
\textsc{CBDM} & 0.267 & 0.000 & 0.138 & 0.100 & 0.120 & 0.100 & 0.086 & 0.067 \\[1.5pt]
{\textsc{T2H}} & 0.017 & 0.000 & 0.196 & 0.200 & 0.133 & 0.167 & 0.200 & 0.160 \\[1.5pt]
\textsc{PoGDiff (Ours)} & \textbf{0.800} & \textbf{1.000} & \textbf{0.435} & \textbf{0.540} & \textbf{0.400} & \textbf{0.533} & \textbf{0.433} & \textbf{0.567} \\[1.5pt]
\bottomrule[1.5pt]
\end{tabular}}
\end{center}
\vspace{-0.6cm}
\end{wraptable}
%

\textbf{Results on gRecall.} 
\tabref{table:total-recall} shows the gRecall for different methods on four datasets, AgeDB-IT2I-small, AgeDB-IT2I-large, VGGFace-IT2I, and CIFAR-100-IT2I. These results show that our PoGDiff achieves much higher gRecall compared to all baselines, demonstrating its impressive diversity and coverage of different attributes of the same individual in the training set (see~\appref{app:recall_example} for more discussion and examples on gRecall).

\section{Conclusions}
In this paper, we propose a general fine-tuning approach called PoGDiff to address the performance drop that occurs when fine-tuning on imbalanced datasets. 
Instead of directly minimizing the KL divergence between the predicted and ground-truth distributions, PoGDiff replaces the ground-truth distribution with a Product of Gaussians (PoG), constructed by combining the original ground-truth targets with the predicted distribution conditioned on a neighboring text embedding. 
Looking ahead, an interesting avenue for future research would be to explore more innovative techniques for re-weighting minority classes (as limitations discussed in~\secref{app:limitation}), particularly within the constraints of (1) long-tailed generation settings, as opposed to recognition tasks, and (2) natural text prompts rather than one-hot class labels. {Exploring PoGDiff for other modalities (e.g., videos and time series) is also an interesting future work.}


\newpage
\bibliography{reference}

\begin{thebibliography}{43}
\providecommand{\natexlab}[1]{#1}
\providecommand{\url}[1]{\texttt{#1}}
\expandafter\ifx\csname urlstyle\endcsname\relax
  \providecommand{\doi}[1]{doi: #1}\else
  \providecommand{\doi}{doi: \begingroup \urlstyle{rm}\Url}\fi

\bibitem[Ho et~al.(2020)Ho, Jain, and Abbeel]{ho2020denoising}
Jonathan Ho, Ajay Jain, and Pieter Abbeel.
\newblock Denoising diffusion probabilistic models.
\newblock \emph{Advances in neural information processing systems}, 33:\penalty0 6840--6851, 2020.

\bibitem[Song et~al.(2020{\natexlab{a}})Song, Sohl-Dickstein, Kingma, Kumar, Ermon, and Poole]{song2020score}
Yang Song, Jascha Sohl-Dickstein, Diederik~P Kingma, Abhishek Kumar, Stefano Ermon, and Ben Poole.
\newblock Score-based generative modeling through stochastic differential equations.
\newblock \emph{arXiv preprint arXiv:2011.13456}, 2020{\natexlab{a}}.

\bibitem[Song et~al.(2020{\natexlab{b}})Song, Meng, and Ermon]{song2020denoising}
Jiaming Song, Chenlin Meng, and Stefano Ermon.
\newblock Denoising diffusion implicit models.
\newblock \emph{arXiv preprint arXiv:2010.02502}, 2020{\natexlab{b}}.

\bibitem[Nichol and Dhariwal(2021)]{nichol2021improved}
Alexander~Quinn Nichol and Prafulla Dhariwal.
\newblock Improved denoising diffusion probabilistic models.
\newblock In \emph{International conference on machine learning}, pages 8162--8171. PMLR, 2021.

\bibitem[Huang et~al.(2023)Huang, Huang, Li, and Shen]{huang2023conditional}
Ding Huang, Jian Huang, Ting Li, and Guohao Shen.
\newblock Conditional stochastic interpolation for generative learning.
\newblock \emph{arXiv preprint arXiv:2312.05579}, 2023.

\bibitem[Ho et~al.(2022)Ho, Saharia, Chan, Fleet, Norouzi, and Salimans]{ho2022cascaded}
Jonathan Ho, Chitwan Saharia, William Chan, David~J Fleet, Mohammad Norouzi, and Tim Salimans.
\newblock Cascaded diffusion models for high fidelity image generation.
\newblock \emph{Journal of Machine Learning Research}, 23\penalty0 (47):\penalty0 1--33, 2022.

\bibitem[Rombach et~al.(2022)Rombach, Blattmann, Lorenz, Esser, and Ommer]{rombach2022high}
Robin Rombach, Andreas Blattmann, Dominik Lorenz, Patrick Esser, and Bj{\"o}rn Ommer.
\newblock High-resolution image synthesis with latent diffusion models.
\newblock In \emph{Proceedings of the IEEE/CVF conference on computer vision and pattern recognition}, pages 10684--10695, 2022.

\bibitem[Saharia et~al.(2022)Saharia, Chan, Saxena, Li, Whang, Denton, Ghasemipour, Gontijo~Lopes, Karagol~Ayan, Salimans, et~al.]{saharia2022photorealistic}
Chitwan Saharia, William Chan, Saurabh Saxena, Lala Li, Jay Whang, Emily~L Denton, Kamyar Ghasemipour, Raphael Gontijo~Lopes, Burcu Karagol~Ayan, Tim Salimans, et~al.
\newblock Photorealistic text-to-image diffusion models with deep language understanding.
\newblock \emph{Advances in neural information processing systems}, 35:\penalty0 36479--36494, 2022.

\bibitem[Ho and Salimans(2022)]{ho2022classifier}
Jonathan Ho and Tim Salimans.
\newblock Classifier-free diffusion guidance.
\newblock \emph{arXiv preprint arXiv:2207.12598}, 2022.

\bibitem[Kong et~al.(2020)Kong, Ping, Huang, Zhao, and Catanzaro]{kong2020diffwave}
Zhifeng Kong, Wei Ping, Jiaji Huang, Kexin Zhao, and Bryan Catanzaro.
\newblock Diffwave: A versatile diffusion model for audio synthesis.
\newblock \emph{arXiv preprint arXiv:2009.09761}, 2020.

\bibitem[Luo and Hu(2021)]{luo2021diffusion}
Shitong Luo and Wei Hu.
\newblock Diffusion probabilistic models for 3d point cloud generation.
\newblock In \emph{Proceedings of the IEEE/CVF conference on computer vision and pattern recognition}, pages 2837--2845, 2021.

\bibitem[Poole et~al.(2022)Poole, Jain, Barron, and Mildenhall]{poole2022dreamfusion}
Ben Poole, Ajay Jain, Jonathan~T Barron, and Ben Mildenhall.
\newblock Dreamfusion: Text-to-3d using 2d diffusion.
\newblock \emph{arXiv preprint arXiv:2209.14988}, 2022.

\bibitem[Shi et~al.(2023)Shi, Wang, Ye, Long, Li, and Yang]{shi2023mvdream}
Yichun Shi, Peng Wang, Jianglong Ye, Mai Long, Kejie Li, and Xiao Yang.
\newblock Mvdream: Multi-view diffusion for 3d generation.
\newblock \emph{arXiv preprint arXiv:2308.16512}, 2023.

\bibitem[Pinaya et~al.(2022)Pinaya, Tudosiu, Dafflon, Da~Costa, Fernandez, Nachev, Ourselin, and Cardoso]{pinaya2022brain}
Walter~HL Pinaya, Petru-Daniel Tudosiu, Jessica Dafflon, Pedro~F Da~Costa, Virginia Fernandez, Parashkev Nachev, Sebastien Ourselin, and M~Jorge Cardoso.
\newblock Brain imaging generation with latent diffusion models.
\newblock In \emph{MICCAI Workshop on Deep Generative Models}, pages 117--126. Springer, 2022.

\bibitem[Qin et~al.(2023)Qin, Zheng, Yao, Zhou, and Zhang]{qin2023class}
Yiming Qin, Huangjie Zheng, Jiangchao Yao, Mingyuan Zhou, and Ya~Zhang.
\newblock Class-balancing diffusion models.
\newblock In \emph{Proceedings of the IEEE/CVF Conference on Computer Vision and Pattern Recognition}, pages 18434--18443, 2023.

\bibitem[Schuhmann et~al.(2022)Schuhmann, Beaumont, Vencu, Gordon, Wightman, Cherti, Coombes, Katta, Mullis, Wortsman, et~al.]{schuhmann2022laion}
Christoph Schuhmann, Romain Beaumont, Richard Vencu, Cade Gordon, Ross Wightman, Mehdi Cherti, Theo Coombes, Aarush Katta, Clayton Mullis, Mitchell Wortsman, et~al.
\newblock Laion-5b: An open large-scale dataset for training next generation image-text models.
\newblock \emph{Advances in Neural Information Processing Systems}, 35:\penalty0 25278--25294, 2022.

\bibitem[Chawla et~al.(2002)Chawla, Bowyer, Hall, and Kegelmeyer]{chawla2002smote}
Nitesh~V Chawla, Kevin~W Bowyer, Lawrence~O Hall, and W~Philip Kegelmeyer.
\newblock Smote: synthetic minority over-sampling technique.
\newblock \emph{Journal of artificial intelligence research}, 16:\penalty0 321--357, 2002.

\bibitem[He and Garcia(2009)]{he2009learning}
Haibo He and Edwardo~A Garcia.
\newblock Learning from imbalanced data.
\newblock \emph{IEEE Transactions on knowledge and data engineering}, 21\penalty0 (9):\penalty0 1263--1284, 2009.

\bibitem[Torgo et~al.(2013)Torgo, Ribeiro, Pfahringer, and Branco]{torgo2013smote}
Lu{\'\i}s Torgo, Rita~P Ribeiro, Bernhard Pfahringer, and Paula Branco.
\newblock Smote for regression.
\newblock In \emph{Portuguese conference on artificial intelligence}, pages 378--389. Springer, 2013.

\bibitem[Branco et~al.(2017)Branco, Torgo, and Ribeiro]{branco2017smogn}
Paula Branco, Lu{\'\i}s Torgo, and Rita~P Ribeiro.
\newblock Smogn: a pre-processing approach for imbalanced regression.
\newblock In \emph{First international workshop on learning with imbalanced domains: Theory and applications}, pages 36--50. PMLR, 2017.

\bibitem[Branco et~al.(2018)Branco, Torgo, and Ribeiro]{branco2018rebagg}
Paula Branco, Luis Torgo, and Rita~P Ribeiro.
\newblock Rebagg: Resampled bagging for imbalanced regression.
\newblock In \emph{Second International Workshop on Learning with Imbalanced Domains: Theory and Applications}, pages 67--81. PMLR, 2018.

\bibitem[Yang et~al.(2021)Yang, Zha, Chen, Wang, and Katabi]{yang2021delving}
Yuzhe Yang, Kaiwen Zha, Yingcong Chen, Hao Wang, and Dina Katabi.
\newblock Delving into deep imbalanced regression.
\newblock In \emph{International conference on machine learning}, pages 11842--11851. PMLR, 2021.

\bibitem[Ren et~al.(2022)Ren, Zhang, Yu, and Liu]{ren2022balanced}
Jiawei Ren, Mingyuan Zhang, Cunjun Yu, and Ziwei Liu.
\newblock Balanced mse for imbalanced visual regression.
\newblock In \emph{Proceedings of the IEEE/CVF Conference on Computer Vision and Pattern Recognition}, pages 7926--7935, 2022.

\bibitem[Gong et~al.(2022)Gong, Mori, and Tung]{gong2022ranksim}
Yu~Gong, Greg Mori, and Frederick Tung.
\newblock Ranksim: Ranking similarity regularization for deep imbalanced regression.
\newblock \emph{arXiv preprint arXiv:2205.15236}, 2022.

\bibitem[Keramati et~al.(2023)Keramati, Meng, and Evans]{keramati2023conr}
Mahsa Keramati, Lili Meng, and R~David Evans.
\newblock Conr: Contrastive regularizer for deep imbalanced regression.
\newblock \emph{arXiv preprint arXiv:2309.06651}, 2023.

\bibitem[Wang and Wang(2024)]{wang2024variational}
Ziyan Wang and Hao Wang.
\newblock Variational imbalanced regression: Fair uncertainty quantification via probabilistic smoothing.
\newblock \emph{Advances in Neural Information Processing Systems}, 36, 2024.

\bibitem[Ruiz et~al.(2023)Ruiz, Li, Jampani, Pritch, Rubinstein, and Aberman]{ruiz2023dreambooth}
Nataniel Ruiz, Yuanzhen Li, Varun Jampani, Yael Pritch, Michael Rubinstein, and Kfir Aberman.
\newblock Dreambooth: Fine tuning text-to-image diffusion models for subject-driven generation.
\newblock In \emph{Proceedings of the IEEE/CVF conference on computer vision and pattern recognition}, pages 22500--22510, 2023.

\bibitem[Kumari et~al.(2023)Kumari, Zhang, Zhang, Shechtman, and Zhu]{kumari2023multi}
Nupur Kumari, Bingliang Zhang, Richard Zhang, Eli Shechtman, and Jun-Yan Zhu.
\newblock Multi-concept customization of text-to-image diffusion.
\newblock In \emph{Proceedings of the IEEE/CVF Conference on Computer Vision and Pattern Recognition}, pages 1931--1941, 2023.

\bibitem[Li et~al.(2024)Li, Cao, Wang, Qi, Cheng, and Shan]{li2024photomaker}
Zhen Li, Mingdeng Cao, Xintao Wang, Zhongang Qi, Ming-Ming Cheng, and Ying Shan.
\newblock Photomaker: Customizing realistic human photos via stacked id embedding.
\newblock In \emph{Proceedings of the IEEE/CVF Conference on Computer Vision and Pattern Recognition}, pages 8640--8650, 2024.

\bibitem[Zhang et~al.(2024)Zhang, Zheng, Yao, Wang, Zhou, Zhang, and Wang]{zhang2024long}
Tianjiao Zhang, Huangjie Zheng, Jiangchao Yao, Xiangfeng Wang, Mingyuan Zhou, Ya~Zhang, and Yanfeng Wang.
\newblock Long-tailed diffusion models with oriented calibration.
\newblock In \emph{The Twelfth International Conference on Learning Representations}, 2024.

\bibitem[Na et~al.(2024)Na, Kim, Bae, Lee, Kwon, Kang, and Moon]{na2024label}
Byeonghu Na, Yeongmin Kim, HeeSun Bae, Jung~Hyun Lee, Se~Jung Kwon, Wanmo Kang, and Il-Chul Moon.
\newblock Label-noise robust diffusion models.
\newblock \emph{arXiv preprint arXiv:2402.17517}, 2024.

\bibitem[Shen et~al.(2023)Shen, Du, Pang, Lin, Wong, and Kankanhalli]{shen2023finetuning}
Xudong Shen, Chao Du, Tianyu Pang, Min Lin, Yongkang Wong, and Mohan Kankanhalli.
\newblock Finetuning text-to-image diffusion models for fairness.
\newblock \emph{arXiv preprint arXiv:2311.07604}, 2023.

\bibitem[Kim et~al.(2024)Kim, Na, Park, Jang, Kim, Kang, and Moon]{kim2024training}
Yeongmin Kim, Byeonghu Na, Minsang Park, JoonHo Jang, Dongjun Kim, Wanmo Kang, and Il-Chul Moon.
\newblock Training unbiased diffusion models from biased dataset.
\newblock In \emph{The Twelfth International Conference on Learning Representations}, 2024.

\bibitem[Choi et~al.(2020)Choi, Grover, Singh, Shu, and Ermon]{choi2020fair}
Kristy Choi, Aditya Grover, Trisha Singh, Rui Shu, and Stefano Ermon.
\newblock Fair generative modeling via weak supervision.
\newblock In \emph{International Conference on Machine Learning}, pages 1887--1898. PMLR, 2020.

\bibitem[Moschoglou et~al.(2017)Moschoglou, Papaioannou, Sagonas, Deng, Kotsia, and Zafeiriou]{moschoglou2017agedb}
Stylianos Moschoglou, Athanasios Papaioannou, Christos Sagonas, Jiankang Deng, Irene Kotsia, and Stefanos Zafeiriou.
\newblock Agedb: the first manually collected, in-the-wild age database.
\newblock In \emph{proceedings of the IEEE conference on computer vision and pattern recognition workshops}, pages 51--59, 2017.

\bibitem[Bae et~al.(2023)Bae, de~La~Gorce, Baltru{\v{s}}aitis, Hewitt, Chen, Valentin, Cipolla, and Shen]{bae2023digiface}
Gwangbin Bae, Martin de~La~Gorce, Tadas Baltru{\v{s}}aitis, Charlie Hewitt, Dong Chen, Julien Valentin, Roberto Cipolla, and Jingjing Shen.
\newblock Digiface-1m: 1 million digital face images for face recognition.
\newblock In \emph{Proceedings of the IEEE/CVF Winter Conference on Applications of Computer Vision}, pages 3526--3535, 2023.

\bibitem[Cao et~al.(2018)Cao, Shen, Xie, Parkhi, and Zisserman]{cao2018vggface2}
Qiong Cao, Li~Shen, Weidi Xie, Omkar~M Parkhi, and Andrew Zisserman.
\newblock Vggface2: A dataset for recognising faces across pose and age.
\newblock In \emph{2018 13th IEEE international conference on automatic face \& gesture recognition (FG 2018)}, pages 67--74. IEEE, 2018.

\bibitem[Cao et~al.(2019)Cao, Wei, Gaidon, Arechiga, and Ma]{cao2019learning}
Kaidi Cao, Colin Wei, Adrien Gaidon, Nikos Arechiga, and Tengyu Ma.
\newblock Learning imbalanced datasets with label-distribution-aware margin loss.
\newblock \emph{Advances in neural information processing systems}, 32, 2019.

\bibitem[Achiam et~al.(2023)Achiam, Adler, Agarwal, Ahmad, Akkaya, Aleman, Almeida, Altenschmidt, Altman, Anadkat, et~al.]{achiam2023gpt}
Josh Achiam, Steven Adler, Sandhini Agarwal, Lama Ahmad, Ilge Akkaya, Florencia~Leoni Aleman, Diogo Almeida, Janko Altenschmidt, Sam Altman, Shyamal Anadkat, et~al.
\newblock Gpt-4 technical report.
\newblock \emph{arXiv preprint arXiv:2303.08774}, 2023.

\bibitem[Caron et~al.(2021)Caron, Touvron, Misra, J{\'e}gou, Mairal, Bojanowski, and Joulin]{caron2021emerging}
Mathilde Caron, Hugo Touvron, Ishan Misra, Herv{\'e} J{\'e}gou, Julien Mairal, Piotr Bojanowski, and Armand Joulin.
\newblock Emerging properties in self-supervised vision transformers.
\newblock In \emph{Proceedings of the IEEE/CVF international conference on computer vision}, pages 9650--9660, 2021.

\bibitem[Heusel et~al.(2017)Heusel, Ramsauer, Unterthiner, Nessler, and Hochreiter]{heusel2017gans}
Martin Heusel, Hubert Ramsauer, Thomas Unterthiner, Bernhard Nessler, and Sepp Hochreiter.
\newblock Gans trained by a two time-scale update rule converge to a local nash equilibrium.
\newblock \emph{Advances in neural information processing systems}, 30, 2017.

\bibitem[Szegedy et~al.(2016)Szegedy, Vanhoucke, Ioffe, Shlens, and Wojna]{szegedy2016rethinking}
Christian Szegedy, Vincent Vanhoucke, Sergey Ioffe, Jon Shlens, and Zbigniew Wojna.
\newblock Rethinking the inception architecture for computer vision.
\newblock In \emph{Proceedings of the IEEE conference on computer vision and pattern recognition}, pages 2818--2826, 2016.

\bibitem[Liu et~al.(2024)Liu, Li, Wu, and Lee]{liu2024visual}
Haotian Liu, Chunyuan Li, Qingyang Wu, and Yong~Jae Lee.
\newblock Visual instruction tuning.
\newblock \emph{Advances in neural information processing systems}, 36, 2024.

\end{thebibliography}
\bibliographystyle{unsrtnat}

\newpage
\appendix
\onecolumn
\section{Proofs for~\prpref{prp:objective}} \label{app:proof}
\begin{proposition}
Assume $\lambda_{\y} = \lambda_{\textrm{PoG}} \triangleq \lambda_t + \lambda_{\y'}$, we have our loss function
\begin{align}
    \mathcal{L}_{t-1}^{\textrm{PoGDiff}} = \mathbb{E}_{q} \left[ 
    \frac{\lambda_{\y}}{2} \left\| \mu_{\theta} (\x_{t}, \y) - \mu_{\textrm{PoG}} \right\|^2 \right] + C.
\end{align}
Here, $C$ is a constant, and $\mu_{\textrm{PoG}}$ denotes the mean of the PoG, with the expression defined in~\eqnref{eq:pog_def}. Then, through derivations based on Gaussian properties, we obtain
\begin{align}
    \mathcal{L}_{t-1}^{\textrm{PoGDiff}} 
    &\leq \mathbb{E}_{q} \left[ 
    \mathcal{A} (\lambda_t) \left\| \epsilon_{\theta} (\x_{t}, \y) - \epsilon \right\|^2 + \mathcal{A} (\lambda_{\y'}) \left\| \epsilon_{\theta} (\x_{t}, \y) - \epsilon_{\theta} (\x_{t}, \y') \right\|^2 \right] + C 
\end{align}
where the function $\mathcal{A} (\lambda) \triangleq \frac{\lambda (1 - \alpha_t)^2}{2 \alpha_t (1 - \Bar{\alpha}_t)}$.
\end{proposition}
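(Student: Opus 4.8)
The plan is to proceed in two stages, matching the two displayed inequalities in the statement. First I would establish the exact expression $\mathcal{L}_{t-1}^{\textrm{PoGDiff}} = \mathbb{E}_q\big[\tfrac{\lambda_{\y}}{2}\|\mu_\theta(\x_t,\y) - \mu_{\textrm{PoG}}\|^2\big] + C$. This follows from the standard closed form for the KL divergence between two multivariate Gaussians with equal (or proportional) covariance matrices: if $p_\theta(\x_{t-1}\mid\x_t,\y) = \mathcal{N}(\mu_\theta(\x_t,\y), \lambda_{\y}^{-1}\I)$ and the PoG target $q(\x_{t-1}\mid\x_t,\x_0,\y)\circ p_\theta(\x_{t-1}\mid\x_t,\y') = \mathcal{N}(\mu_{\textrm{PoG}}, \lambda_{\textrm{PoG}}^{-1}\I)$, then under the assumption $\lambda_{\y} = \lambda_{\textrm{PoG}}$ the trace term and log-determinant term both vanish, leaving only the quadratic-in-means term $\tfrac{\lambda_{\y}}{2}\|\mu_\theta - \mu_{\textrm{PoG}}\|^2$; everything else is absorbed into $C$. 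Here $\mu_{\textrm{PoG}} = \tfrac{\lambda_t \x_{t-1} + \lambda_{\y'}\,\mu_\theta(\x_t,\y')}{\lambda_t + \lambda_{\y'}}$ by \eqnref{eq:pog_def}, where $\x_{t-1}$ is the mean of the ground-truth Gaussian $q(\x_{t-1}\mid\x_t,\x_0,\y)$.

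For the second stage I would substitute the expression for $\mu_{\textrm{PoG}}$ and rewrite the difference inside the norm as a convex combination: since $\lambda_{\y} = \lambda_t + \lambda_{\y'}$, we get
\begin{align*}
\mu_\theta(\x_t,\y) - \mu_{\textrm{PoG}} = \frac{\lambda_t}{\lambda_{\y}}\big(\mu_\theta(\x_t,\y) - \x_{t-1}\big) + \frac{\lambda_{\y'}}{\lambda_{\y}}\big(\mu_\theta(\x_t,\y) - \mu_\theta(\x_t,\y')\big).
\end{align*}
Then I would apply convexity of the squared norm (or equivalently Jensen's inequality with weights $\lambda_t/\lambda_{\y}$ and $\lambda_{\y'}/\lambda_{\y}$) to bound $\|\mu_\theta - \mu_{\textrm{PoG}}\|^2 \le \tfrac{\lambda_t}{\lambda_{\y}}\|\mu_\theta(\x_t,\y) - \x_{t-1}\|^2 + \tfrac{\lambda_{\y'}}{\lambda_{\y}}\|\mu_\theta(\x_t,\y) - \mu_\theta(\x_t,\y')\|^2$. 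Multiplying by $\tfrac{\lambda_{\y}}{2}$ replaces the prefactors by $\tfrac{\lambda_t}{2}$ and $\tfrac{\lambda_{\y'}}{2}$ respectively.

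The last step is the change of variables from the $\mu$-parameterization to the $\epsilon$-parameterization, exactly as in~\citet{ho2020denoising}. Using $\x_t = \sqrt{\bar\alpha_t}\,\x_0 + \sqrt{1-\bar\alpha_t}\,\epsilon$ and the DDPM reparameterization $\mu_\theta(\x_t,\y) = \tfrac{1}{\sqrt{\alpha_t}}\big(\x_t - \tfrac{1-\alpha_t}{\sqrt{1-\bar\alpha_t}}\epsilon_\theta(\x_t,\y)\big)$ together with the analogous identity for the true posterior mean $\x_{t-1}$ in terms of $\epsilon$, the difference $\mu_\theta(\x_t,\y) - \x_{t-1}$ becomes $-\tfrac{1-\alpha_t}{\sqrt{\alpha_t}\sqrt{1-\bar\alpha_t}}\big(\epsilon_\theta(\x_t,\y) - \epsilon\big)$, and likewise $\mu_\theta(\x_t,\y) - \mu_\theta(\x_t,\y')$ scales the same way into $\epsilon_\theta(\x_t,\y) - \epsilon_\theta(\x_t,\y')$ since both $\mu_\theta$ terms share the same $\x_t$. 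Squaring produces the factor $\tfrac{(1-\alpha_t)^2}{\alpha_t(1-\bar\alpha_t)}$, and combining with the $\tfrac{\lambda_t}{2}$ and $\tfrac{\lambda_{\y'}}{2}$ prefactors yields precisely $\mathcal{A}(\lambda_t)$ and $\mathcal{A}(\lambda_{\y'})$ with $\mathcal{A}(\lambda) = \tfrac{\lambda(1-\alpha_t)^2}{2\alpha_t(1-\bar\alpha_t)}$.

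\textbf{Main obstacle.} I expect the bookkeeping in the first stage to be the delicate part: one must be careful about what plays the role of the ``ground-truth'' mean inside the PoG (it is the posterior mean $\tilde\mu_t(\x_t,\x_0)$, not $\x_0$ itself), and about the precise meaning of the assumption $\lambda_{\y} = \lambda_{\textrm{PoG}}$ — it must hold as a chosen (time-dependent) matching of variances so that the KL reduces cleanly to the Mahalanobis term with no residual covariance mismatch. The convexity step and the $\epsilon$-reparameterization are then routine and mirror the original DDPM derivation; the only subtlety there is confirming the cross term $\mu_\theta(\x_t,\y) - \mu_\theta(\x_t,\y')$ genuinely converts to $\epsilon_\theta(\x_t,\y) - \epsilon_\theta(\x_t,\y')$ with the \emph{same} scalar factor, which it does because $\x_t$ is shared between the two predictions.
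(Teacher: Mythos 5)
Your proposal is correct and follows the same overall architecture as the paper's proof: reduce the KL between equal-precision Gaussians to the mean-matching term, bound that term by a weighted sum of the two individual quadratic discrepancies, then pass to the $\epsilon$-parameterization, where the common factor $\tfrac{(1-\alpha_t)^2}{\alpha_t(1-\bar{\alpha}_t)}$ turns $\tfrac{\lambda}{2}$ into $\mathcal{A}(\lambda)$. The one place where your route genuinely differs is the middle inequality. You write $\mu_{\theta}(\x_t,\y)-\mu_{\textrm{PoG}}$ as the convex combination $\tfrac{\lambda_t}{\lambda_{\y}}(\mu_{\theta}(\x_t,\y)-\mu_t)+\tfrac{\lambda_{\y'}}{\lambda_{\y}}(\mu_{\theta}(\x_t,\y)-\mu_{\theta}(\x_t,\y'))$ and invoke Jensen/convexity of the squared norm; the paper instead expands $\tfrac{1}{2}\lambda_t(\mu_{\theta}(\y)-\mu_t)^2+\tfrac{1}{2}\lambda_{\y'}(\mu_{\theta}(\y)-\mu_{\theta}(\y'))^2$ and completes the square, obtaining the exact identity that this sum equals $\tfrac{1}{2}\lambda_{\y}(\mu_{\theta}(\y)-\mu_{\textrm{PoG}})^2+\tfrac{\lambda_t\lambda_{\y'}}{2(\lambda_t+\lambda_{\y'})}(\mu_t-\mu_{\theta}(\y'))^2$ and then drops the nonnegative residual. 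The two arguments establish the same lemma; yours is shorter and makes the role of convexity transparent, while the paper's version buys an explicit expression for the slack in the bound (it vanishes exactly when $\mu_t=\mu_{\theta}(\y')$), which is informative about when the surrogate objective is tight. Your flagged subtleties (the PoG is formed with the posterior mean, the precision matching must hold so the trace and log-determinant terms are constant, and the cross term inherits the same scalar because both predictions share $\x_t$) are all handled consistently with the paper.
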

\begin{proof}
To prove the above inequality, we need to prove the following lemma. 

\begin{lemma} \label{lm:pog_ineq}
Assume $\lambda_{\y} = \lambda_{\textrm{PoG}} \triangleq \lambda_t + \lambda_{\y'}$, and for simplicity we shorten the notation from $\epsilon_{\theta} (\x_{t}, \y)$ and $\mu_{\theta} (\x_{t}, \y)$ to $\epsilon_{\theta} (\y)$ and $\mu_{\theta} (\y)$, respectively. Then we have 
\begin{align}
    \frac{1}{2} \lambda_t \left( \mu_{\theta} \left( \y \right) - \mu_t \right)^2 + \frac{1}{2} \lambda_{\y'} \left( \mu_{\theta} \left( \y \right) - \mu_{\theta} \left( \y' \right) \right)^2 \geq \frac{1}{2} \lambda_{\y} \left( \mu_{\theta} \left( \y \right) - \mu_{\textrm{PoG}} \right)^2
\end{align}
\end{lemma}
\begin{proof}
By the definition of Gaussian property, we have
\begin{align}
    &\quad \frac{1}{2} \lambda_t \left( \mu_{\theta} (\y) - \mu_t \right)^2 + \frac{1}{2} \lambda_{\y'} \left( \mu_{\theta} (\y)  - \mu_{\theta} (\y') \right)^2 \nonumber \\
    &= \frac{[\mu_{\theta} (\y)]^2 - 2 \mu_t \mu_{\theta} (\y) + \mu_{t}^2}{2 \lambda_{t}^{-1}} + \frac{[\mu_{\theta} (\y)]^2 - 2 \mu_{\theta} (\y') \mu_{\theta} (\y) + [\mu_{\theta} (\y')]^2}{2 \lambda_{\y'}^{-1}} \nonumber \\
    &= \frac{ \left( \lambda_{t}^{-1} + \lambda_{\y'}^{-1} \right) [\mu_{\theta} (\y)]^2 - 2 \left( \frac{\mu_{t}}{\lambda_{\y'}} + \frac{\mu_{\theta} (\y')}{\lambda_{t}} \right) \mu_{\theta} (\y) + \frac{\mu_{t}^{2}}{\lambda_{\y'}} + \frac{[\mu_{\theta} (\y')]^2}{\lambda_t}}{2 [\lambda_t \lambda_{\y'}]^{-1}} \nonumber \\
    &= \frac{[\mu_{\theta} (\y)]^2 - 2 \left( \frac{\mu_{t} \lambda_t + [\mu_{\theta} (\y')] \lambda_{\y'}}{\lambda_{t} + \lambda_{\y'}} \right) \mu_{\theta} (\y) + \frac{\mu_{t}^{2} \lambda_t + [\mu_{\theta} (\y')]^2 \lambda_{\y'}}{\lambda_{t} + \lambda_{\y'}}}{\frac{2}{\lambda_{t} + \lambda_{\y'}}} \nonumber \\
    &\quad + \frac{\left[\frac{\mu_{t} \lambda_t + [\mu_{\theta} (\y')] \lambda_{\y'}}{\lambda_{t} + \lambda_{\y'}}\right]^2}{\frac{2}{\lambda_{t} + \lambda_{\y'}}} - \frac{\left[\frac{\mu_{t} \lambda_t + [\mu_{\theta} (\y')] \lambda_{\y'}}{\lambda_{t} + \lambda_{\y'}}\right]^2}{\frac{2}{\lambda_{t} + \lambda_{\y'}}} \nonumber \\
    &= \frac{\left( \mu_{\theta} (\y) - \frac{\mu_{t} \lambda_t + [\mu_{\theta} (\y')] \lambda_{\y'}}{\lambda_{t} + \lambda_{\y'}} \right)^2 }{\frac{2}{\lambda_{t} + \lambda_{\y'}}} + \frac{(\mu_{t}^{2} \lambda_t + [\mu_{\theta} (\y')]^2 \lambda_{\y'})(\lambda_{t} + \lambda_{\y'}) - (\mu_{t} \lambda_t + [\mu_{\theta} (\y')] \lambda_{\y'})^2}{2 (\lambda_{t} + \lambda_{\y'})} \nonumber \\
    &= \frac{1}{2} \lambda_{\y} \left( \mu_{\theta} (\y) - \mu_{\textrm{PoG}} \right)^2 + \frac{\lambda_{t} \lambda_{\y'} (\mu_t - \mu_{\theta} (\y'))^2}{2 (\lambda_{t} + \lambda_{\y'})} \nonumber \\
    &\geq \frac{1}{2} \lambda_{\y} \left( \mu_{\theta} (\y) - \mu_{\textrm{PoG}} \right)^2 \nonumber.
\end{align}
Thus we complete the proof.
\end{proof}
From~\lemref{lm:pog_ineq}, we can derive
\begin{eqnarray*}
    \frac{1}{2} \lambda_{\y} \left\| \mu_{\theta} (\y) - \mu_{\textrm{PoG}} \right\|^2 &\equiv& \frac{1}{2} \lambda_{\y} \left( \mu_{\theta} (\y) - \mu_{\textrm{PoG}} \right)^2 \nonumber \\
    &\leq& \frac{1}{2} \lambda_t \left( \mu_{\theta} (\y) - \mu_t \right)^2 + \frac{1}{2} \lambda_{\y'} \left( \mu_{\theta} (\y) - \mu_{\theta} (\y') \right)^2 \nonumber \\
    &\equiv& \frac{1}{2} \lambda_t \left\| \mu_{\theta} (\y) - \mu_t \right\|^2 + \frac{1}{2} \lambda_{\y'} \left\| \mu_{\theta} (\y) - \mu_{\theta} (\y') \right\|^2 \nonumber \\
    &\equiv& \mathcal{A} (\lambda_t) \left\| \epsilon_{\theta} (\y) - \epsilon \right\|^2 + \mathcal{A} (\lambda_{\y'}) \left\| \epsilon_{\theta} (\y) - \epsilon_{\theta} (\y') \right\|^2, \nonumber
\end{eqnarray*}
where the function $\mathcal{A} (\lambda) \triangleq \frac{\lambda (1 - \alpha_t)^2}{2 \alpha_t (1 - \Bar{\alpha}_t)}$, and the last equivalence is because the transform from $\mu_{\theta} (\cdot)$ to $\epsilon_{\theta} (\cdot)$.
\end{proof}

\section{Ablation Study} \label{app:ablation}
\begin{wraptable}{R}{0.6\textwidth}
\setlength{\tabcolsep}{2.5pt}
\vspace{-8pt}
\caption{\textbf{Ablation Studies.}}
\vspace{-0.8pt}
\label{table:ablation}
\small
\begin{center}
\resizebox{0.59\textwidth}{!}{
\begin{tabular}{l|cc|cc|cc|cc}
\toprule[1.5pt]
Datasets & \multicolumn{8}{c}{AgeDB-IT2I-Large} \\ \midrule
Size & \multicolumn{2}{c|}{FID~$\downarrow$} & \multicolumn{2}{c|}{Human ~$\uparrow$} & \multicolumn{2}{c|}{GPT-4o~$\uparrow$} & \multicolumn{2}{c}{DINO~$\uparrow$}  \\ \midrule
Shot & All & Few & All & Few & All & Few & All & Few \\ \midrule
\textsc{w/o $\y'$ (Vanilla)} & 7.67 & 11.67 & 0.60 & 0.20 & 4.90 & 3.60 & 0.34 & 0.25 \\[1.5pt]
\textsc{w/o $\psi_\textrm{img-sim}$} & 6.41 & 10.49 & \textbf{0.84} & \textbf{0.68} & 8.40 & 7.60 & 0.57 & 0.46 \\[1.5pt]
\textsc{w/o $\psi_{\textrm{inv-txt-den}}$} & 6.35 & 10.43 & \textbf{0.84} & \textbf{0.68} & 8.20 & 7.80 & 0.64 & 0.51 \\[1.5pt]
\textsc{PoGDiff (Ours)} & \textbf{6.03} & \textbf{10.16} & \textbf{0.84} & \textbf{0.68} & \textbf{8.50} & \textbf{8.00} & \textbf{0.66} & \textbf{0.52} \\ 
\bottomrule[1.5pt]
\end{tabular}}
\end{center}
\vspace{-0.4cm}
\end{wraptable}
To verify the effectiveness of each component in the second term in our PoGDiff final objective function from~\eqnref{eq:psi_final}, we report the accuracy of our proposed PoGDiff after removing the $\y'$ (i.e., same as Vanilla model), the Image Similarity term $\psi_\textrm{img-sim}$, and/or the Inverse Text Densities term $\psi_\textrm{inv-txt-den}$ in~\tabref{table:ablation} for AgeDB-IT2I-L. The results show that removing either term leads to a performance drop, confirming the importance of both terms in our PoGDiff.

\section{Details for Datasets} \label{app:datasets}
Note that our method is designed for fine-tuning. Therefore our setup does not require large-scale, long-tailed human datasets. Instead, we sample from these datasets, as long as they meet the following criteria: (1) the dataset must be long-tailed, (2) traditional methods must fail to recognize the minority classes, and (3) there must be a distinguishable difference between the majority and minority classes (e.g., we prefer visual distinctions between the two groups to better highlight the impact of our method). \figref{fig:density_per_exp_bar} shows the label density distribution of these datasets, and their level of imbalance (see~\appref{app:sparse} for details on data sparsity).  

\emph{AgeDB-IT2I:} AgeDB-IT2I is constructed from the AgeDB dataset~\citep{moschoglou2017agedb}. For each image $\x$ in AgeDB, we passed it through the pretrained LLaVA-1.6-7b model~\citep{liu2024visual} to generate textual captions $\tilde{\y}$. Since the identities in AgeDB are well-known individuals that the pretrained SDv1.5~\citep{rombach2022high} might have encountered during pre-training, we masked the true names and replaced them with generic, random names, leading to a new caption $\y$. 
For example, we replace ``Albert Einstein'' in the caption with a random name ``Lukas''. Finally, we collect all $(\y,\x)$ pairs to form our AgeDB-IT2I dataset. 

Additionally, given that the identities (i.e., people or individuals) in AgeDB are well-known figures, we sampled from AgeDB to create three datasets for comprehensive analysis: AgeDB-IT2I-L (large), AgeDB-IT2I-M (medium), and AgeDB-IT2I-S (small). Specifically:
\begin{itemize}[nosep,leftmargin=15pt]
    \item \emph{AgeDB-IT2I-L (large).} This dataset consists of $976$ images across $223$ identities, with each majority class containing $30$ images and each minority class containing $2$ images.
    \item \emph{AgeDB-IT2I-M (medium).}  This dataset consists of $100$ images across $10$ identities, with each majority class containing $30$ images and each minority class containing $2$ images.
    \item \emph{AgeDB-IT2I-S (small).} This dataset contains $32$ images across $2$ identities, where each majority class consists of $30$ images and each minority class consists of $2$ images.
\end{itemize}

\emph{DigiFace-IT2I:} DigiFace-IT2I is derived from the DigiFace dataset~\citep{bae2023digiface}. It contains $985$ images across $179$ identities, where each majority class consists of $30$ images and each minority class consists of $2$ images. 
We use a process similar to AgeDB-IT2I to collect text-image pairs, forming this DigiFace-IT2I dataset. 

{\emph{VGGFace-IT2I:} VGGFace-IT2I is a subset from VGGFace2~\citep{cao2018vggface2}. It contains $1933$ images across $193$ identities, where each majority class consists of $49$ images and each minority class consists of $2$ images. }

\section{Details for Baselines} \label{app:baseline}
We employ \textbf{Stable Diffusion v1.5}~\citep{rombach2022high} as the backbone diffusion model. As this is the first work to explore imbalanced text-to-image (IT2I) diffusion models with \textbf{natural text prompts}, we adapt the current state-of-the-art methods designed for long-tailed T2I diffusion models \textbf{with one-hot text prompts} to serve as baselines. The baselines are described below:
\begin{itemize}[nosep,leftmargin=15pt]
    \item \emph{Vanilla:} We use term \textbf{Vanilla} to denote a model that does not incorporate any techniques for handling imbalanced data, equivalent to fine-tuning a Stable Diffusion model without additional modifications.
    \item \emph{CBDM:} We use term \textbf{CBDM} to denote a model that incorporates the Class Balancing Diffusion Model (CBDM)~\citep{qin2023class} approach. During fine-tuning, we sample an additional text embedding $\y'$ from the entire fine-tuning dataset and apply the CBDM objective function. All hyperparameters are kept the same as in the original paper, with further details available in~\citet{qin2023class}. 
    \item {\emph{T2H:} We use the term \textbf{T2H} to denote a model that uses Long-Tailed Diffusion Models with Oriented Calibration (T2H)~\citep{zhang2024long}. T2H is a feature augmentation method, but is not directly applicable to our setting. Specifically, T2H~\citep{zhang2024long} relies on the class frequency, which is not available in our experiments. In this paper, we adapt this method to our settings by using the density for each text prompt embedding to serve as the class frequency in T2H~\citep{zhang2024long}.}
\end{itemize}

\section{Details for Evaluation} \label{app:eval}
In this section, we provide details on our evaluation procedures.

\textbf{FID Score.} For each identity, we collect all images from the original AgeDB or DigiFace datasets as the \emph{true image set}. Then, In \emph{all-shot} evaluation, for AgeDB-IT2I-S and AgeDB-IT2I-M, we generate $100$ images per identity as the \emph{fake image set}, and for AgeDB-IT2I-L, DigiFace-IT2I, VGGFace-IT2L and CIFAR-100-IT2I, we generate $20$ images per identity as the \emph{fake image set}. In \emph{few-shot} evaluation, we generate $500$ images per identity as the \emph{fake image set}. For all generations, we employ the DDIM sampling technique~\citep{song2020denoising} with $50$ steps. The prompt used during generation is ``An image of \{p\}.'' where ``p'' is the name of the identity (e.g., Albert Einstein).

\textbf{Human \& GPT-4o Feedback.} For each minority identity, we generate $5$ images using DDIM sampling~\citep{song2020denoising} with $50$ steps. We then ask $10$ people to evaluate whether the images depict the same person (scored as $1.0$) or not (scored as $0.0$). Additionally, for each image, we ask the GPT-4o model to rate the similarity on a scale from $1$ to $10$. The prompt used during generation is ``An image of \{ p \}.'' where ``p'' is the name of the identity. The text prompt using for GPT-4o model is ``It is mandatory to give a score that how close the person in the image to a well-known individual. A score of 10.0 means they are exactly the same person, while a score of 0.0 means they are definitely not the same person. How close you think the person in the image is to `p-true'.'' where ``p-true'' denotes the real name (well-known name) in AgeDB. Note that the GPT-4o model might occasionally refuse to provide a score, and you may need to repeat and compel it to give a rating. For each image, we collect 10 scores from the GPT-4o model and report the average rating.

\textbf{Evaluating Image Similarities.} We collect samples that are outside our training dataset (e.g., AgeDB-T2I-L) but belong to the original dataset (e.g., AgeDB). Using the same prompt, we generate the corresponding images. A pre-trained DINOv2 model~\citep{caron2021emerging} is then applied to extract latent features, and cosine similarities are calculated.

\section{Details for Implementation}
For both baselines and our model, we used the same hyper-parameter settings, specifically
\begin{itemize}[nosep,leftmargin=15pt]
    \item \emph{AgeDB-IT2I-M $\&$ AgeDB-IT2I-S.} The learning rate was set to $1 \times 10^{-5}$, with a maximum of $6,000$ training steps. The effective batch size per GPU was $8$, calculated as $8 \textrm{ (Batch Size)} \times 1 \textrm{ (Gradient Accumulation Step)}$.
    \item \emph{AgeDB-IT2I-L $\&$ DigiFace-IT2I.} The learning rate was set to $1 \times 10^{-5}$, with a maximum of $12,000$ training steps. The effective batch size per GPU was $32$, calculated as $8 \textrm{ (Batch Size)} \times 4 \textrm{ (Gradient Accumulation Steps)}$.
    \item \emph{VGGFace-IT2I $\&$ CIFAR-100-IT2I.} The learning rate was set to $1 \times 10^{-4}$, with a maximum of $24,000$ training steps. The effective batch size per GPU was $128$, calculated as $32 \textrm{ (Batch Size)} \times 4 \textrm{ (Gradient Accumulation Steps)}$.
\end{itemize}

\section{Full Results} \label{app:res}
{We report the performance of different methods in terms of FID score, human evaluation score, GPT-4o score, and DINO score in~\tabref{table:app-fid}, ~\tabref{table:app-dino}, ~\tabref{table:app-human} and ~\tabref{table:app-gpt}, respectively. Across all tables, we observe that our PoGDiff consistently outperforms all baselines. More results and discussions are in ~\appref{app:fid_limit}. Notably, PoGDiff demonstrates significant improvements, especially in few-shot scenarios (i.e., for minority classes). It is also worth noting that CBDM~\citep{qin2023class} performs extremely poorly on AgeDB-IT2I-S and AgeDB-IT2I-M datasets. This is because their method samples text conditions from the entire space, which may work in one-hot class settings, but in our context (natural text conditions), this sampling technique misguides the model during training. In addition, \tabref{table:app-recall} shows the gRecall for different methods on three datasets, AgeDB-IT2I-Small, AgeDB-IT2I-Medium, and AgeDB-IT2I-Large. These results show that our PoGDiff achieves much higher gRecall compared to all baselines, demonstrating its impressive diversity and coverage of different attributes of the same individual in the training set (see~\appref{app:recall_example} for more discussion and examples on gRecall).}
\begin{table}[t]
\centering
\setlength{\tabcolsep}{2pt}
\vspace{-0.2cm}
\caption{\textbf{Performance based on FID score.}}
\vspace{-0.2cm}
\label{table:app-fid}
\scriptsize
\begin{center}
\resizebox{0.99\textwidth}{!}{
\begin{tabular}{l|cc|cc|cc|cc|cc|cc}
\toprule[1.5pt]
Datasets (-IT2I) & \multicolumn{6}{c|}{AgeDB} & \multicolumn{2}{c|}{DigiFace} & \multicolumn{2}{c|}{{VGGFace}} & \multicolumn{2}{c}{{CIFAR-100}} \\ \midrule
Size & \multicolumn{2}{c|}{Small} & \multicolumn{2}{c|}{Medium} & \multicolumn{2}{c|}{Large} & \multicolumn{2}{c|}{Large} & \multicolumn{2}{c|}{Large} & \multicolumn{2}{c}{Large} \\ \midrule
Metric & \multicolumn{12}{c}{FID~$\downarrow$} \\ \midrule
Shot & All & Few & All & Few & All & Few & All & Few & All & Few & All & Few \\ \midrule
\textsc{Vanilla} & 14.88 & 13.72 & 12.87 & 12.56 & 7.67 & 11.67 & 7.18 & 12.23 & 7.59 & 12.08 & 7.19 & 11.46 \\[1.5pt]
\textsc{CBDM} & 14.72 & 14.13 & 11.63 & 11.59 & 7.18 & 11.12 & 6.96 & 12.72 & 7.23 & 11.91 & 7.26 & 11.94 \\[1.5pt]
{\textsc{T2H}} & 14.85 & 13.66 & 12.79 & 12.52 & 7.61 & 11.64 & 7.14 & 12.22 & 7.34 & 12.02 & 7.10 & 11.39 \\[1.5pt]
\textsc{PoGDiff (Ours)} & \textbf{14.15} & \textbf{12.88} & \textbf{10.89} & \textbf{10.64} & \textbf{6.03} & \textbf{10.16} & \textbf{6.84} & \textbf{11.21} & \textbf{6.29} & \textbf{10.97} & \textbf{6.24} & \textbf{9.41} \\[1.5pt]
\bottomrule[1.5pt]
\end{tabular}}
\end{center}
\vspace{-0.1cm}
\end{table}
\begin{table}[t]
\centering
\setlength{\tabcolsep}{2pt}
\vspace{-0.2cm}
\caption{\textbf{Performance based on DINO score.}}
\vspace{-0.2cm}
\label{table:app-dino} 
\scriptsize
\begin{center}
\resizebox{0.99\linewidth}{!}{
\begin{tabular}{l|cc|cc|cc|cc|cc|cc}
\toprule[1.5pt]
Datasets (-IT2I) & \multicolumn{6}{c|}{AgeDB} & \multicolumn{2}{c|}{DigiFace} & \multicolumn{2}{c|}{{VGGFace}} & \multicolumn{2}{c}{{CIFAR-100}} \\ \midrule
Size & \multicolumn{2}{c|}{Small} & \multicolumn{2}{c|}{Medium} & \multicolumn{2}{c|}{Large} & \multicolumn{2}{c|}{Large} & \multicolumn{2}{c|}{Large} & \multicolumn{2}{c}{Large}  \\ \midrule
Metric & \multicolumn{12}{c}{DINO (cosine similarity) scores~$\uparrow$} \\ \midrule
Shot & All & Few & All & Few & All & Few & All & Few & All & Few & All & Few \\ \midrule
\textsc{Vanilla} & 0.42 & 0.37 & 0.39 & 0.28 & 0.34 & 0.25 & 0.42 & 0.36 & 0.41 & 0.29 & 0.48 & 0.32 \\[1.5pt]
\textsc{CBDM} & 0.54 & 0.09 & 0.38 & 0.11 & 0.41 & 0.26 & 0.34 & 0.16 & 0.46 & 0.22 & 0.52 & 0.28 \\[1.5pt]
{\textsc{T2H}} & 0.43 & 0.39 & 0.42 & 0.29 & 0.37 & 0.26 & 0.44 & 0.36 & 0.42 & 0.28 & 0345 & 0.30 \\[1.5pt]
\textsc{PoGDiff (Ours)} & \textbf{0.77} & \textbf{0.73} & \textbf{0.69} & \textbf{0.56} & \textbf{0.66} & \textbf{0.52} & \textbf{0.64} & \textbf{0.49} & \textbf{0.69} & \textbf{0.55} & \textbf{0.73} & \textbf{0.61} \\[1.5pt]
\bottomrule[1.5pt]
\end{tabular}}
\end{center}
\vspace{-0.8cm}
\end{table}
\begin{table}[t]
\centering
\setlength{\tabcolsep}{2pt}
\vspace{-0.2cm}
\caption{\textbf{Performance based on human evaluation.} The evaluation is a binary decision: Image is either judged as representing the same individual (score 1.0) or not (score 0.0).}
\vspace{-0.2cm}
\label{table:app-human}   
\scriptsize
\begin{center}
\resizebox{0.8\linewidth}{!}{
\begin{tabular}{l|cc|cc|cc|cc|cc}
\toprule[1.5pt]
Datasets (-IT2I) & \multicolumn{6}{c|}{AgeDB} & \multicolumn{2}{c|}{{VGGFace}} & \multicolumn{2}{c}{{CIFAR-100}} \\ \midrule
Size & \multicolumn{2}{c|}{Small} & \multicolumn{2}{c|}{Medium} & \multicolumn{2}{c|}{Large} & \multicolumn{2}{c|}{Large} & \multicolumn{2}{c}{Large} \\ \midrule
Metric & \multicolumn{10}{c}{Human Score~$\uparrow$} \\ \midrule
Shot & All & Few & All & Few & All & Few & All & Few & All & Few \\ \midrule
\textsc{Vanilla} & 0.50 & 0.00 & 0.66 & 0.32 & 0.60 & 0.20 & 0.62 & 0.16 & 0.72 & 0.30 \\[1.5pt]
\textsc{CBDM} & 0.50 & 0.00 & 0.44 & 0.08 & 0.56 & 0.12 & 0.54 & 0.10 & 0.63 & 0.24 \\[1.5pt]
{\textsc{T2H}} & 0.50 & 0.00 & 0.66 & 0.32 & 0.60 & 0.20 & 0.62 & 0.16 & 0.72 & 0.30 \\[1.5pt]
\textsc{PoGDiff (Ours)} & \textbf{1.00} & \textbf{1.00} & \textbf{0.96} & \textbf{0.92} & \textbf{0.84} & \textbf{0.68} & \textbf{0.78} & \textbf{0.64} & \textbf{0.84} & \textbf{0.72} \\[1.5pt]
\bottomrule[1.5pt]
\end{tabular}}
\end{center}
\vspace{-0.3cm}
\end{table}
\begin{table}[t]
\centering
\setlength{\tabcolsep}{2pt}
\vspace{-0.2cm}
\caption{\textbf{Performance based on GPT-4o evaluation.} The scores are from 0 to 10, with higher scores indicating the individual resembles the well-known person.}
\vspace{-0.2cm}
\label{table:app-gpt} 
\scriptsize
\begin{center}
\resizebox{0.8\textwidth}{!}{
\begin{tabular}{l|cc|cc|cc|cc|cc}
\toprule[1.5pt]
Datasets (-IT2I) & \multicolumn{6}{c|}{AgeDB} & \multicolumn{2}{c|}{{VGGFace}} & \multicolumn{2}{c}{{CIFAR-100}} \\ \midrule
Size & \multicolumn{2}{c|}{Small} & \multicolumn{2}{c|}{Medium} & \multicolumn{2}{c|}{Large} & \multicolumn{2}{c|}{Large} & \multicolumn{2}{c}{Large} \\ \midrule
Metric & \multicolumn{10}{c}{GPT-4o Evaluation~$\uparrow$} \\ \midrule
Shot & All & Few & All & Few & All & Few & All & Few & All & Few \\ \midrule
\textsc{Vanilla} & 5.20 & 3.20 & 4.30 & 2.90 & 4.90 & 3.60 & 4.50 & 2.90 & 6.00 & 3.20 \\[1.5pt]
\textsc{CBDM} & 4.50 & 1.10 & 1.30 & 1.00 & 3.10 & 1.70 & 2.80 & 1.30 & 3.40 & 2.00 \\[1.5pt]
{\textsc{T2H}} & 5.50 & 3.10 & 4.60 & 3.00 & 4.70 & 3.90 & 4.60 & 3.10 & 6.20 & 3.60 \\[1.5pt]
\textsc{PoGDiff (Ours)} & \textbf{9.10} & \textbf{8.40} & \textbf{8.80} & \textbf{8.20} & \textbf{8.50} & \textbf{8.00} & \textbf{8.20} & \textbf{7.60} & \textbf{8.40} & \textbf{8.00} \\[1.5pt]
\bottomrule[1.5pt]
\end{tabular}}
\end{center}
\vspace{-0.8cm}
\end{table}
\begin{table}[t]
\centering
\setlength{\tabcolsep}{2pt}
\vspace{-0.2cm}
\caption{{\textbf{Performance based on gRecall score.} See the detailed definition of gRecall in~\secref{sec:setup}.}}
\vspace{-0.8pt}
\label{table:app-recall}   
\small
\begin{center}
\resizebox{0.9\linewidth}{!}{
\begin{tabular}{l|cc|cc|cc|cc|cc}
\toprule[1.5pt]
Datasets (-IT2I) & \multicolumn{6}{c|}{AgeDB} & \multicolumn{2}{c|}{{VGGFace}} & \multicolumn{2}{c}{{CIFAR-100}} \\ \midrule
Size & \multicolumn{2}{c|}{Small} & \multicolumn{2}{c|}{Medium} & \multicolumn{2}{c|}{Large} & \multicolumn{2}{c|}{Large} & \multicolumn{2}{c}{Large} \\ \midrule
Metric & \multicolumn{10}{c}{gRecall Score~$\uparrow$} \\ \midrule
Shot & All & Few & All & Few & All & Few & All & Few & All & Few \\ \midrule
\textsc{Vanilla} & 0.017 & 0.000 & 0.104 & 0.167 & 0.196 & 0.200 & 0.133 & 0.167 & 0.200 & 0.160 \\[1.5pt]
\textsc{CBDM} & 0.267 & 0.000 & 0.159 & 0.083 & 0.138 & 0.100 & 0.120 & 0.100 & 0.086 & 0.067 \\[1.5pt]
{\textsc{T2H}} & 0.017 & 0.000 & 0.104 & 0.167 & 0.196 & 0.200 & 0.133 & 0.167 & 0.200 & 0.160 \\[1.5pt]
\textsc{PoGDiff (Ours)} & \textbf{0.800} & \textbf{1.000} & \textbf{0.517} & \textbf{0.642} & \textbf{0.435} & \textbf{0.540} & \textbf{0.400} & \textbf{0.533} & \textbf{0.433} & \textbf{0.567}\\[1.5pt]
\bottomrule[1.5pt]
\end{tabular}}
\end{center}
\vspace{-0.3cm}
\end{table}

\section{Limitations} \label{app:limitation}

\textbf{Datasets.} Our method relies heavily on ``borrowing'' the statistical strength of neighboring samples from minority classes, making the results sensitive to the size of the minority class. (i.e., in our assumption we require \textbf{at least $2$} for each minority class). {In addition, while our AgeDB-IT2I-small and AgeDB-IT2I-medium are actually the sparse dataset, the cardinality remains limited in our experiments. Therefore, how to address IT2I problem under this settings are interesting directions. } 

\textbf{Models.} Our method is a general fine-tuning approach designed for datasets that the Stable Diffusion (SD) model has not encountered during pre-training. {As shown in~\figref{fig:pogdiff_main}, color deviation is very common and is a known issue when one fine-tunes diffusion models (as also mentioned in~\citep{song2020score}); for example, we can observe similar color deviation in both baselines (e.g., CBDM and Stable Diffusion v1.5) and our PoGDiff. This can be mitigated using the exponential moving average (EMA) technique~\citep{song2020score}; however, this is orthogonal to our method and is outside the scope of our paper. Moreover, as shown in~\figref{fig:pogdiff_img}, the baseline Stable Diffusion also suffers from this issue.} 
Besides, exploring PoGDiff's performance when training from scratch is also an interesting direction for future work.

\textbf{Methodology.} {The distance between the current text embedding $\y$ and the sampled $\y'$ impacts the final generated results, therefore in our paper, we introduced a more sophisticated approach for computing the weight $\psi$, which depends on the quality of the image pre-trained model and our trained VAE. These mechanisms ensure that data points with smaller distances are assigned higher effective weights. Effectively producing $\psi$ for any new, arbitrary dataset remains an open question and is an interesting avenue for future work, as it could further enhance the method’s performance.}

{\textbf{Evaluation.} Our goal is to adapt the pretrained diffusion model to a specific dataset; therefore the evaluation should focus on the target dataset rather than the original dataset used during pre-training. For example, when a user fine-tunes a model on a dataset of employee faces, s/he is not interested in how well the fine-tuned model can generate images of ``tables'' and ``chairs''. Evaluating the model's performance on the original dataset used during pre-training would be an intriguing direction for future work, but it is orthogonal to our proposed PoGDiff and out of the scope of our paper.}

\section{Additional Details for AgeDB-IT2I-small in~\tabref{table:total-recall}} \label{app:recall_example}
{For AgeDB-IT2I-small, there are two IDs, one ``majority'' ID with $30$ images and one minority ID with $2$ images.
\begin{itemize}[nosep,leftmargin=15pt]
    \item For \textbf{VANILLA} and \textbf{T2H}, the gRecall for the majority ID and the minority ID is $1/30$ and $0/2$, respectively. Therefore, the average gRecall score is $0.5 * 1/30 + 0.5 * 0/2 \approx 0.0167$.
    \item For \textbf{CBDM}, the gRecall for the majority ID and the minority ID is $16/30$ and $0/2$, respectively. Therefore, the average gRecall score is $0.5 * 16/30 + 0.5 * 0/2 \approx 0.2667$.
    \item For \textbf{PoGDiff (Ours)}, the gRecall for the majority ID and the minority ID is $18/30$ and $2/2$, respectively. Therefore, the average gRecall score is $0.5 * 18/30 + 0.5 * 2/2 = 0.8$.
\end{itemize}} 

\section{Discussion} \label{app:diss}
\subsection{Problem Settings}
{We would like to clarify that our paper focuses on a setting different from works like DreamBooth~\citep{ruiz2023dreambooth}, and our focus is not on diversity, but on finetuning a diffusion model on an imbalanced dataset. Specifically:
\begin{itemize}
    \item \textbf{Different Setting from Custom Techniques like DreamBooth~\citep{ruiz2023dreambooth}, CustomDiffusion~\citep{kumari2023multi} and PhotoMaker~\citep{li2024photomaker}.} Previous works like CustomDiffusion and PhotoMaker focus on adjusting the model to generate images with \textbf{a single object}, e.g., a specific dog. In contrast, our PoGDiff focuses finetuning the diffusion model on an entire data with \textbf{many different objects/persons simultaneously}. They are \textbf{very different settings} and are \textbf{complementary} to each other. 
    \item \textbf{Diversity.} Note that while our PoG can naturally generate images with diversity, diversity is actually \textbf{not} our focus. Our goal is to fine-tune a diffusion model on an imbalanced dataset. For example, PoGDiff can fine-tune a diffusion model on an imbalanced dataset of employee faces so that the diffusion model can generate new images that match each employee's identity. In this case, we are more interested in ``faithfulness'' rather than ``diversity''.
\end{itemize}}
\begin{figure}[t]
\centering
\vskip -0.3cm
\includegraphics[width=1.0\textwidth]{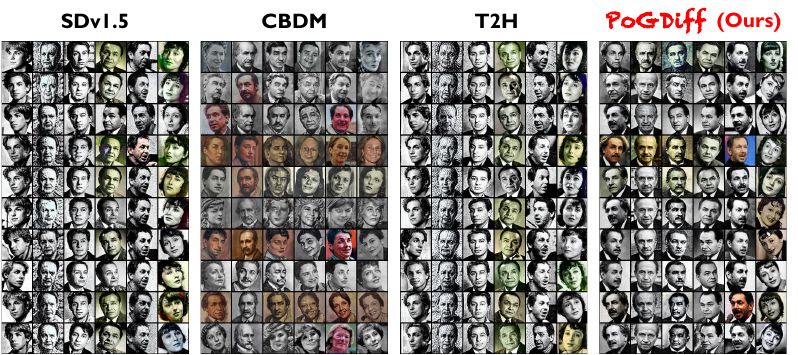}
\vskip -0.3cm
    \caption{{Example generated images from different methods. Our PoGDiff outperforms the baselines in terms of both generation accuracy and generation quality.}}
\vskip -0.1cm
\label{fig:pogdiff_img}
\end{figure}
\begin{figure}[t]
\centering
\vskip -0.1cm
\includegraphics[width=1.0\textwidth]{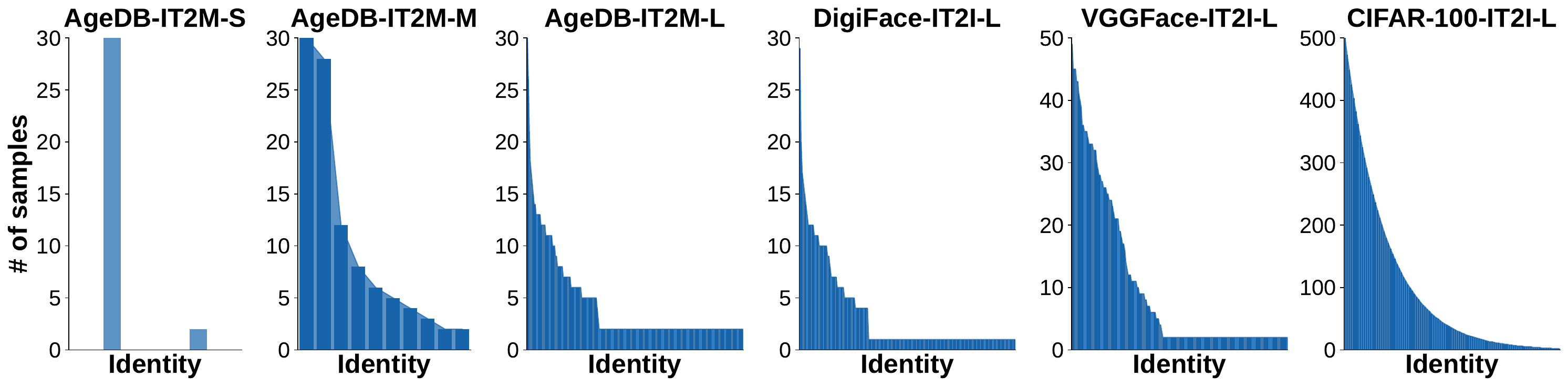}
\vskip -0.3cm
    \caption{{{Overview of label distributions for six IT2I datasets in bar plots. The x-axis corresponds to the identities (i.e., people or class)}.}}
\vskip -0.3cm
\label{fig:density_per_exp_bar}
\end{figure}
%
%
\begin{figure}[t]
\centering
\vskip -0.1cm
\includegraphics[width=0.5\linewidth]{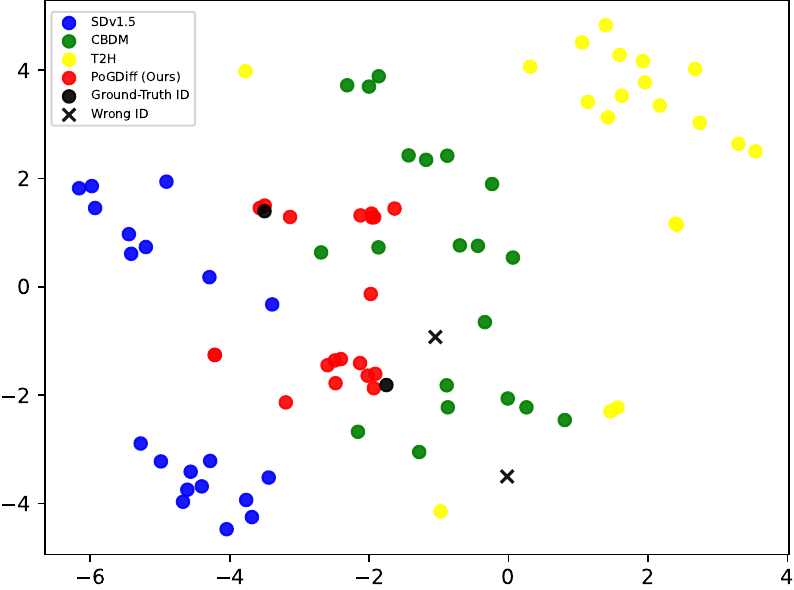}
\vskip -0.3cm
    \caption{{TSNE visualization for all the methods for an example individual in the AgeDB-IT2M-large dataset.}}
\vskip -0.6cm
\label{fig:tsne}
\end{figure}
\subsection{More Detailed Analysis: Understanding~\figref{fig:pogdiff_img}} 
{\figref{fig:pogdiff_img} shows randomly sampled generated images on low-density classes in AgeDB-IT2I-L. Across each column, the individual names are Albert Einstein, JW Marriott, J.P. Morgan, Edward G. Robinson, Larry Ellison, and Luise Rainer, respectively. The results show that our PoGDiff achieves significantly better accuracy and quality for tail classes.}

{Note that one of our primary objectives is to generate accurate images of the same individual while ensuring facial consistency. Therefore \textbf{diversity can sometimes be harmful}. For example, given a text input of ``Einstein'', generated images with high diversity would generate both male and females images; \textbf{this is obviously incorrect}. Therefore it is important to strike a balance between \textbf{diversity} and \textbf{accuracy}, a goal that our PoGDiff achieves. }

{Specifically, as shown in~\figref{fig:pogdiff_img}: 
\begin{itemize}[nosep,leftmargin=15pt]
    \item \textbf{First Three Columns of SDv1.5, CBDM, and PoGDiff}: In these cases, the \textbf{training} dataset contains \textbf{only two images per person}. With such limited data, it is impossible to introduce meaningful diversity.
    \begin{itemize}[nosep,leftmargin=14pt]
        \item SDv1.5 fails to generate accurate images altogether in this scenario.
        \item While CBDM might appear to produce the ``diversity'', it does so incorrectly, as it generates an image of a woman when the target is Einstein.
        \item In contrast, our PoGDiff can successfully generate accurate images (e.g., Einstein images in Column 1) while still enjoying sufficient diversity.
    \end{itemize}
    \item \textbf{Fourth and Fifth Columns}: Here, the training dataset contains a medium number of images per person (5–7 images). Under these conditions:
    \begin{itemize}[nosep,leftmargin=14pt]
        \item SDv1.5 can generate accurate representations of individuals, but its outputs lack diversity.
        \item CBDM, on the other hand, introduces ``diversity'' but consistently generates incorrect results.
        \item In contrast, our method produces accurate images of the target individual while demonstrating greater diversity than SDv1.5.
    \end{itemize}
    \item \textbf{Sixth Column}: In this case, the training dataset includes 30 images per person.
    \begin{itemize}[nosep,leftmargin=14pt]
        \item SDv1.5 generates accurate images but with nearly identical expressions, i.e., poor diversity.
        \item CBDM still fails to generate accurate depictions of the individual.
        \item In contrast, PoGDiff successfully generates accurate images while maintaining diversity.
    \end{itemize}
\end{itemize}
In summary, typical diversity evaluation in diffusion model evaluations, such as generating multiple types of trees for a ``tree'' prompt, is \textbf{not the focus of our setting} and may even be \textbf{misleading}. In our setting, the key is to balance accuracy and diversity. }

\subsection{Why Not Directly Smooth Text Embedding?}
{Preliminary results indicate that directly smoothing the text embeddings does not yield meaningful improvements. Below we provide some insights into why this approach might fail. Suppose we have a text embedding $y$ and its corresponding neighboring embedding $y'$. Depending on their relationship, we are likely to encounter three cases:
\begin{itemize}[nosep,leftmargin=15pt]
    \item \textbf{Case 1: $\y' = \y$.} In this case, applying a reweighting method such as a linear combination results in no meaningful change, as the smoothing outcome is still $\y$. 
    \item \textbf{Case 2: $\y'$ is far from $\y$.} If $\y'$ is significantly distant from $\y$, combining them becomes irrelevant and nonsensical, as $\y'$ no longer represents useful neighboring information.
    \item \textbf{Case 3: $\y'$ is very close to $\y$.} When $\y'$ is close to $\y$, the reweighting can be approximated as: $\alpha \y + (1-\alpha) \y' \approx \y + (1-\alpha)(\y' - \y)$. 
   Since $\y'$ is nearly identical to $\y$, this effectively introduces a small weighted noise term $(1-\alpha)(\y' - \y)$ into $\y$. In our preliminary experiments, this additional noise degraded the performance compared to the original baseline results.
\end{itemize}
Based on these observations, direct smoothing of text embeddings appears ineffective and may even harm performance in some cases.}

\subsection{Our Dataset Covers Different Levels of Sparsity} \label{app:sparse}
{Our AgeDB-IT2M-small and AgeDB-IT2M-medium datasets are actually very sparse and are meant for evaluate the sparse data. For example, the AgeDB-IT2M-small only contains images from 2 persons, it is therefore a very sparse data setting, compared to AgeDB-IT2M-large with images across 223 persons. ~\figref{fig:density_per_exp_bar} shows the bar plot version for our datasets, while sparse settings are not our primary focus, we agree that addressing imbalanced image generation in such setting is an interesting and valuable direction, and we have included a discussion about this in the limitations section of the paper.}

\subsection{Discussion on FID} \label{app:fid_limit}
{It is important to note that the FID score measures only the distance between Gaussian distributions of ground-truth and generated images, relying solely on mean and variance. As a result, it does not fully capture the nuances of our task. This is why we include additional evaluation metrics such as DINO Score, Human Score, and GPT-4o Score, to comprehensively verify our method's superiority (as shown in~\tabref{table:total-dino}, ~\tabref{table:total-human} and~\tabref{table:total-gpt}).}

\textbf{Additional Experiments: Limitation of FID.} In addition, we have added a figure showcasing a t-SNE visualization for a minority class as an example, as shown in~\figref{fig:tsne}, to further illustrate the limitation of FID we mentioned above. As shown in the figure: 
\begin{itemize}[nosep,leftmargin=15pt]
    \item There are two ground-truth IDs (i.e., two ground-truth individuals) in the training set. 
    \item Our PoGDiff can successfully generate images similar to these two ground-truth ID while maintaining diversity.
    \item All baselines, including CBDM, fail to generate accurate images according to the ground-truth IDs. In fact most generated images from the baselines are similar to other IDs, i.e., generating the facial images of wrong individuals.
\end{itemize}
These results show that:
\begin{itemize}[nosep,leftmargin=15pt]
    \item Our PoGDiff significantly outperforms the baselines.
    \item FID fails to capture such improvements because it depends only on the mean and variance of the distribution, losing a lot of information during evaluation.
\end{itemize}}

{\textbf{FID Measures Both ID Consistency and Diversity.} Note that our FID is computed \emph{for each ID separately}, and the final FID score in the tables (e.g.,~\tabref{table:total-fid}) is the average FID over all IDs. Therefore FID measures both ID consistency and diversity.} 

To see why, note that the FID score measures the distance between two Gaussian distributions, where the \emph{mean} of the Gaussian represents the \emph{identity (ID)} and the \emph{variance} represents the \emph{diversity}. For example, the \emph{mean} of the ground-truth distribution represents the embedding position of the ground-truth ID, while the \emph{variance} of the ground-truth distribution represents the \emph{diversity} of ground-truth images, and similarly for the generated images. 

Therefore, a lower FID score indicates that the generated-image distribution better matches the ground-truth distribution \textbf{in terms of both ID and diversity}.

{\subsection{Which Distribution the Model Converges to After Training}
Our PoG objective introduces an additional term that encourages consistency across semantically similar conditioning prompts, but does not fundamentally alter the underlying diffusion process. While full convergence guarantees would be an interesting work, such analyses are rare in conditional diffusion literature. Since our formulation preserves the standard denoising score matching structure, its convergence behavior broadly follows that of existing diffusion models.}

{\subsection{The Variance of the Predicted Distribution} \label{app:psi_var}
\begin{itemize}
    \item In our method, our predicted variance is indeed conditioned on $\psi$, and $\psi$ is conditioned on $\y$, as mentioned in~\eqnref{eq:pog_simple}.
    \item The definition of $\psi$ relies on the assumption that data belongs to the same class or person (i.e., $\mathcal{I}(\cdot)$). Although such information might not implicitly available for some datasets, We can use a pretrained image classifier (e.g., ResNet and ViT) to obtain the label. Alternatively, we can also use clustering method and treat the cluster ID as the class label.
    \item Although the definition of $\psi$ relies on the VAE training. In our experiment, we used a simple VAE with three-layer MLPs for both the encoder and decoder. We found that variations in architecture, learning rate, and number of training epochs had little effect on the final fine-tuning performance since it is very efficient and easy to train, it only costs around a few minutes for a single GPU.
\end{itemize}}

\subsection{Computational Efficiency}
For the computation efficiency of our method:
\begin{itemize}
    \item\textbf{Training:} Since our denoising step requires two forward passes of the denoising model, the runtime is approximately twice that of Stable Diffusion (SD)~\citep{rombach2022high}. However, we observe that under the same training time budget (e.g., SD for $12k$ steps vs. PoGDiff for $6k$ steps), our method is already able to generate accurate images for tail classes, while Stable Diffusion continues to produce samples biased toward head classes.
    \item\textbf{Inference:} During inference, our generation process remains identical to that of Stable Diffusion and thus incurs no computational overhead.
\end{itemize}

{\subsection{Discussions on Failure Cases}
The reason why some generated individuals do not match expectations lies in the selection of $y'$. Since our text prompts are fully end-to-end generated from LLAVA-NEXT, different individuals may have higher similarity scores under $\psi$ than the true target. For example, Einstein at age 41 ($y$="The image is a black and white photograph of a man named Einstein, who has a mustache, curly hair, and a pipe in his mouth.") might appear more similar - based on textual semantics - to CHARLIERUGGLES at age 48 ($y'_1$="CHARLIERUGGLES is a man is a man with a mustache, holding a pipe in his mouth.") than to Einstein at age 24 ($y'_2$="The image is a black and white portrait of a man named Einstein, who has a mustache and is wearing a suit."). As a result, PoGDiff may use $y'_1$ rather than $y'_2$ as the neighbor $y'$ for $y$. }

{In general, when the dataset includes many individuals with overlapping semantic traits, it is possible that in some steps $y'$ is not the same individual as $y$, leading to subtle deviations in facial details. These deviations are typically small but beneficial; they help improve diversity while still preserving overall identity correctness.}

\section{Impact Statement}
{Finetuning under imbalanced datasets in specific domain presents an inescapable challenge in generative AI. For example, when generating the counterfactual outcomes for users with specific actions, such ``user(or patient)--action--outcome'' pairs are always imbalanced, as it is impossible for any company or any hospital to obtains all the pairs. As such, to save the budget, learning the mapping from ``user(or patient)--action'' (sentence description) to ``outcome'' (images) is where this challenge is particularly pronounced. 
Our proposed method, PoGDiff, represents an innovative and efficient solution to navigate this issue. We argue that the complexity and importance of this problem warrant further research, given its profound implications across diverse fields. This exploration not only advances our understanding but also opens new avenues for significant impact, underscoring the need for continued investigation into training generative models under imbalanced datasets.}

\newpage
\section*{NeurIPS Paper Checklist}

The checklist is designed to encourage best practices for responsible machine learning research, addressing issues of reproducibility, transparency, research ethics, and societal impact. Do not remove the checklist: {\bf The papers not including the checklist will be desk rejected.} The checklist should follow the references and follow the (optional) supplemental material.  The checklist does NOT count towards the page
limit. 

Please read the checklist guidelines carefully for information on how to answer these questions. For each question in the checklist:
\begin{itemize}
    \item You should answer \answerYes{}, \answerNo{}, or \answerNA{}.
    \item \answerNA{} means either that the question is Not Applicable for that particular paper or the relevant information is Not Available.
    \item Please provide a short (1–2 sentence) justification right after your answer (even for NA). 
\end{itemize}

{\bf The checklist answers are an integral part of your paper submission.} They are visible to the reviewers, area chairs, senior area chairs, and ethics reviewers. You will be asked to also include it (after eventual revisions) with the final version of your paper, and its final version will be published with the paper.

The reviewers of your paper will be asked to use the checklist as one of the factors in their evaluation. While "\answerYes{}" is generally preferable to "\answerNo{}", it is perfectly acceptable to answer "\answerNo{}" provided a proper justification is given (e.g., "error bars are not reported because it would be too computationally expensive" or "we were unable to find the license for the dataset we used"). In general, answering "\answerNo{}" or "\answerNA{}" is not grounds for rejection. While the questions are phrased in a binary way, we acknowledge that the true answer is often more nuanced, so please just use your best judgment and write a justification to elaborate. All supporting evidence can appear either in the main paper or the supplemental material, provided in appendix. If you answer \answerYes{} to a question, in the justification please point to the section(s) where related material for the question can be found.

IMPORTANT, please:
\begin{itemize}
    \item {\bf Delete this instruction block, but keep the section heading ``NeurIPS Paper Checklist"},
    \item  {\bf Keep the checklist subsection headings, questions/answers and guidelines below.}
    \item {\bf Do not modify the questions and only use the provided macros for your answers}.
\end{itemize}


\begin{enumerate}

\item {\bf Claims}
    \item[] Question: Do the main claims made in the abstract and introduction accurately reflect the paper's contributions and scope?
    \item[] Answer: \answerYes{} 
    \item[] Justification: It can be found in the~\secref{sec:intro}.
    \item[] Guidelines:
    \begin{itemize}
        \item The answer NA means that the abstract and introduction do not include the claims made in the paper.
        \item The abstract and/or introduction should clearly state the claims made, including the contributions made in the paper and important assumptions and limitations. A No or NA answer to this question will not be perceived well by the reviewers. 
        \item The claims made should match theoretical and experimental results, and reflect how much the results can be expected to generalize to other settings. 
        \item It is fine to include aspirational goals as motivation as long as it is clear that these goals are not attained by the paper. 
    \end{itemize}

\item {\bf Limitations}
    \item[] Question: Does the paper discuss the limitations of the work performed by the authors?
    \item[] Answer: \answerYes{} 
    \item[] Justification: In~\appref{app:limitation}.
    \item[] Guidelines:
    \begin{itemize}
        \item The answer NA means that the paper has no limitation while the answer No means that the paper has limitations, but those are not discussed in the paper. 
        \item The authors are encouraged to create a separate "Limitations" section in their paper.
        \item The paper should point out any strong assumptions and how robust the results are to violations of these assumptions (e.g., independence assumptions, noiseless settings, model well-specification, asymptotic approximations only holding locally). The authors should reflect on how these assumptions might be violated in practice and what the implications would be.
        \item The authors should reflect on the scope of the claims made, e.g., if the approach was only tested on a few datasets or with a few runs. In general, empirical results often depend on implicit assumptions, which should be articulated.
        \item The authors should reflect on the factors that influence the performance of the approach. For example, a facial recognition algorithm may perform poorly when image resolution is low or images are taken in low lighting. Or a speech-to-text system might not be used reliably to provide closed captions for online lectures because it fails to handle technical jargon.
        \item The authors should discuss the computational efficiency of the proposed algorithms and how they scale with dataset size.
        \item If applicable, the authors should discuss possible limitations of their approach to address problems of privacy and fairness.
        \item While the authors might fear that complete honesty about limitations might be used by reviewers as grounds for rejection, a worse outcome might be that reviewers discover limitations that aren't acknowledged in the paper. The authors should use their best judgment and recognize that individual actions in favor of transparency play an important role in developing norms that preserve the integrity of the community. Reviewers will be specifically instructed to not penalize honesty concerning limitations.
    \end{itemize}

\item {\bf Theory assumptions and proofs}
    \item[] Question: For each theoretical result, does the paper provide the full set of assumptions and a complete (and correct) proof?
    \item[] Answer: \answerYes{} 
    \item[] Justification: Although we do not have theoretical result like a theory paper, but we provide the proof for our lemma in appendix.
    \item[] Guidelines:
    \begin{itemize}
        \item The answer NA means that the paper does not include theoretical results. 
        \item All the theorems, formulas, and proofs in the paper should be numbered and cross-referenced.
        \item All assumptions should be clearly stated or referenced in the statement of any theorems.
        \item The proofs can either appear in the main paper or the supplemental material, but if they appear in the supplemental material, the authors are encouraged to provide a short proof sketch to provide intuition. 
        \item Inversely, any informal proof provided in the core of the paper should be complemented by formal proofs provided in appendix or supplemental material.
        \item Theorems and Lemmas that the proof relies upon should be properly referenced. 
    \end{itemize}

    \item {\bf Experimental result reproducibility}
    \item[] Question: Does the paper fully disclose all the information needed to reproduce the main experimental results of the paper to the extent that it affects the main claims and/or conclusions of the paper (regardless of whether the code and data are provided or not)?
    \item[] Answer: \answerYes{} 
    \item[] Justification: Yes, they are all discussed in main paper and appendix.
    \item[] Guidelines:
    \begin{itemize}
        \item The answer NA means that the paper does not include experiments.
        \item If the paper includes experiments, a No answer to this question will not be perceived well by the reviewers: Making the paper reproducible is important, regardless of whether the code and data are provided or not.
        \item If the contribution is a dataset and/or model, the authors should describe the steps taken to make their results reproducible or verifiable. 
        \item Depending on the contribution, reproducibility can be accomplished in various ways. For example, if the contribution is a novel architecture, describing the architecture fully might suffice, or if the contribution is a specific model and empirical evaluation, it may be necessary to either make it possible for others to replicate the model with the same dataset, or provide access to the model. In general. releasing code and data is often one good way to accomplish this, but reproducibility can also be provided via detailed instructions for how to replicate the results, access to a hosted model (e.g., in the case of a large language model), releasing of a model checkpoint, or other means that are appropriate to the research performed.
        \item While NeurIPS does not require releasing code, the conference does require all submissions to provide some reasonable avenue for reproducibility, which may depend on the nature of the contribution. For example
        \begin{enumerate}
            \item If the contribution is primarily a new algorithm, the paper should make it clear how to reproduce that algorithm.
            \item If the contribution is primarily a new model architecture, the paper should describe the architecture clearly and fully.
            \item If the contribution is a new model (e.g., a large language model), then there should either be a way to access this model for reproducing the results or a way to reproduce the model (e.g., with an open-source dataset or instructions for how to construct the dataset).
            \item We recognize that reproducibility may be tricky in some cases, in which case authors are welcome to describe the particular way they provide for reproducibility. In the case of closed-source models, it may be that access to the model is limited in some way (e.g., to registered users), but it should be possible for other researchers to have some path to reproducing or verifying the results.
        \end{enumerate}
    \end{itemize}

\item {\bf Open access to data and code}
    \item[] Question: Does the paper provide open access to the data and code, with sufficient instructions to faithfully reproduce the main experimental results, as described in supplemental material?
    \item[] Answer: \answerNo{} 
    \item[] Justification: They are all public, and detailed information are in the appendix. For the code, we will release it once this paper is accepted.
    \item[] Guidelines:
    \begin{itemize}
        \item The answer NA means that paper does not include experiments requiring code.
        \item Please see the NeurIPS code and data submission guidelines (\url{https://nips.cc/public/guides/CodeSubmissionPolicy}) for more details.
        \item While we encourage the release of code and data, we understand that this might not be possible, so “No” is an acceptable answer. Papers cannot be rejected simply for not including code, unless this is central to the contribution (e.g., for a new open-source benchmark).
        \item The instructions should contain the exact command and environment needed to run to reproduce the results. See the NeurIPS code and data submission guidelines (\url{https://nips.cc/public/guides/CodeSubmissionPolicy}) for more details.
        \item The authors should provide instructions on data access and preparation, including how to access the raw data, preprocessed data, intermediate data, and generated data, etc.
        \item The authors should provide scripts to reproduce all experimental results for the new proposed method and baselines. If only a subset of experiments are reproducible, they should state which ones are omitted from the script and why.
        \item At submission time, to preserve anonymity, the authors should release anonymized versions (if applicable).
        \item Providing as much information as possible in supplemental material (appended to the paper) is recommended, but including URLs to data and code is permitted.
    \end{itemize}

\item {\bf Experimental setting/details}
    \item[] Question: Does the paper specify all the training and test details (e.g., data splits, hyperparameters, how they were chosen, type of optimizer, etc.) necessary to understand the results?
    \item[] Answer: \answerYes{} 
    \item[] Justification: Yes, they are all discussed in the appendix.
    \item[] Guidelines:
    \begin{itemize}
        \item The answer NA means that the paper does not include experiments.
        \item The experimental setting should be presented in the core of the paper to a level of detail that is necessary to appreciate the results and make sense of them.
        \item The full details can be provided either with the code, in appendix, or as supplemental material.
    \end{itemize}

\item {\bf Experiment statistical significance}
    \item[] Question: Does the paper report error bars suitably and correctly defined or other appropriate information about the statistical significance of the experiments?
    \item[] Answer: \answerNA{} 
    \item[] Justification: The error bars are not applicable in our settings.
    \item[] Guidelines:
    \begin{itemize}
        \item The answer NA means that the paper does not include experiments.
        \item The authors should answer "Yes" if the results are accompanied by error bars, confidence intervals, or statistical significance tests, at least for the experiments that support the main claims of the paper.
        \item The factors of variability that the error bars are capturing should be clearly stated (for example, train/test split, initialization, random drawing of some parameter, or overall run with given experimental conditions).
        \item The method for calculating the error bars should be explained (closed form formula, call to a library function, bootstrap, etc.)
        \item The assumptions made should be given (e.g., Normally distributed errors).
        \item It should be clear whether the error bar is the standard deviation or the standard error of the mean.
        \item It is OK to report 1-sigma error bars, but one should state it. The authors should preferably report a 2-sigma error bar than state that they have a 96\% CI, if the hypothesis of Normality of errors is not verified.
        \item For asymmetric distributions, the authors should be careful not to show in tables or figures symmetric error bars that would yield results that are out of range (e.g. negative error rates).
        \item If error bars are reported in tables or plots, The authors should explain in the text how they were calculated and reference the corresponding figures or tables in the text.
    \end{itemize}

\item {\bf Experiments compute resources}
    \item[] Question: For each experiment, does the paper provide sufficient information on the computer resources (type of compute workers, memory, time of execution) needed to reproduce the experiments?
    \item[] Answer: \answerYes{} 
    \item[] Justification: They are all discussed in the appendix.
    \item[] Guidelines:
    \begin{itemize}
        \item The answer NA means that the paper does not include experiments.
        \item The paper should indicate the type of compute workers CPU or GPU, internal cluster, or cloud provider, including relevant memory and storage.
        \item The paper should provide the amount of compute required for each of the individual experimental runs as well as estimate the total compute. 
        \item The paper should disclose whether the full research project required more compute than the experiments reported in the paper (e.g., preliminary or failed experiments that didn't make it into the paper). 
    \end{itemize}
    
\item {\bf Code of ethics}
    \item[] Question: Does the research conducted in the paper conform, in every respect, with the NeurIPS Code of Ethics \url{https://neurips.cc/public/EthicsGuidelines}?
    \item[] Answer: \answerNA{} 
    \item[] Justification: It is not applicable to our settings.
    \item[] Guidelines:
    \begin{itemize}
        \item The answer NA means that the authors have not reviewed the NeurIPS Code of Ethics.
        \item If the authors answer No, they should explain the special circumstances that require a deviation from the Code of Ethics.
        \item The authors should make sure to preserve anonymity (e.g., if there is a special consideration due to laws or regulations in their jurisdiction).
    \end{itemize}

\item {\bf Broader impacts}
    \item[] Question: Does the paper discuss both potential positive societal impacts and negative societal impacts of the work performed?
    \item[] Answer: \answerYes{} 
    \item[] Justification: They are discussed in the appendix.
    \item[] Guidelines:
    \begin{itemize}
        \item The answer NA means that there is no societal impact of the work performed.
        \item If the authors answer NA or No, they should explain why their work has no societal impact or why the paper does not address societal impact.
        \item Examples of negative societal impacts include potential malicious or unintended uses (e.g., disinformation, generating fake profiles, surveillance), fairness considerations (e.g., deployment of technologies that could make decisions that unfairly impact specific groups), privacy considerations, and security considerations.
        \item The conference expects that many papers will be foundational research and not tied to particular applications, let alone deployments. However, if there is a direct path to any negative applications, the authors should point it out. For example, it is legitimate to point out that an improvement in the quality of generative models could be used to generate deepfakes for disinformation. On the other hand, it is not needed to point out that a generic algorithm for optimizing neural networks could enable people to train models that generate Deepfakes faster.
        \item The authors should consider possible harms that could arise when the technology is being used as intended and functioning correctly, harms that could arise when the technology is being used as intended but gives incorrect results, and harms following from (intentional or unintentional) misuse of the technology.
        \item If there are negative societal impacts, the authors could also discuss possible mitigation strategies (e.g., gated release of models, providing defenses in addition to attacks, mechanisms for monitoring misuse, mechanisms to monitor how a system learns from feedback over time, improving the efficiency and accessibility of ML).
    \end{itemize}
    
\item {\bf Safeguards}
    \item[] Question: Does the paper describe safeguards that have been put in place for responsible release of data or models that have a high risk for misuse (e.g., pretrained language models, image generators, or scraped datasets)?
    \item[] Answer: \answerNA{} 
    \item[] Justification: It is not applicable to our settings.
    \item[] Guidelines:
    \begin{itemize}
        \item The answer NA means that the paper poses no such risks.
        \item Released models that have a high risk for misuse or dual-use should be released with necessary safeguards to allow for controlled use of the model, for example by requiring that users adhere to usage guidelines or restrictions to access the model or implementing safety filters. 
        \item Datasets that have been scraped from the Internet could pose safety risks. The authors should describe how they avoided releasing unsafe images.
        \item We recognize that providing effective safeguards is challenging, and many papers do not require this, but we encourage authors to take this into account and make a best faith effort.
    \end{itemize}

\item {\bf Licenses for existing assets}
    \item[] Question: Are the creators or original owners of assets (e.g., code, data, models), used in the paper, properly credited and are the license and terms of use explicitly mentioned and properly respected?
    \item[] Answer: \answerYes{} 
    \item[] Justification: We cite all the references.
    \item[] Guidelines:
    \begin{itemize}
        \item The answer NA means that the paper does not use existing assets.
        \item The authors should cite the original paper that produced the code package or dataset.
        \item The authors should state which version of the asset is used and, if possible, include a URL.
        \item The name of the license (e.g., CC-BY 4.0) should be included for each asset.
        \item For scraped data from a particular source (e.g., website), the copyright and terms of service of that source should be provided.
        \item If assets are released, the license, copyright information, and terms of use in the package should be provided. For popular datasets, \url{paperswithcode.com/datasets} has curated licenses for some datasets. Their licensing guide can help determine the license of a dataset.
        \item For existing datasets that are re-packaged, both the original license and the license of the derived asset (if it has changed) should be provided.
        \item If this information is not available online, the authors are encouraged to reach out to the asset's creators.
    \end{itemize}

\item {\bf New assets}
    \item[] Question: Are new assets introduced in the paper well documented and is the documentation provided alongside the assets?
    \item[] Answer: \answerNA{} 
    \item[] Justification: It is not applicable to our settings.
    \item[] Guidelines:
    \begin{itemize}
        \item The answer NA means that the paper does not release new assets.
        \item Researchers should communicate the details of the dataset/code/model as part of their submissions via structured templates. This includes details about training, license, limitations, etc. 
        \item The paper should discuss whether and how consent was obtained from people whose asset is used.
        \item At submission time, remember to anonymize your assets (if applicable). You can either create an anonymized URL or include an anonymized zip file.
    \end{itemize}

\item {\bf Crowdsourcing and research with human subjects}
    \item[] Question: For crowdsourcing experiments and research with human subjects, does the paper include the full text of instructions given to participants and screenshots, if applicable, as well as details about compensation (if any)? 
    \item[] Answer: \answerNA{} 
    \item[] Justification: It is not applicable to our settings.
    \item[] Guidelines:
    \begin{itemize}
        \item The answer NA means that the paper does not involve crowdsourcing nor research with human subjects.
        \item Including this information in the supplemental material is fine, but if the main contribution of the paper involves human subjects, then as much detail as possible should be included in the main paper. 
        \item According to the NeurIPS Code of Ethics, workers involved in data collection, curation, or other labor should be paid at least the minimum wage in the country of the data collector. 
    \end{itemize}

\item {\bf Institutional review board (IRB) approvals or equivalent for research with human subjects}
    \item[] Question: Does the paper describe potential risks incurred by study participants, whether such risks were disclosed to the subjects, and whether Institutional Review Board (IRB) approvals (or an equivalent approval/review based on the requirements of your country or institution) were obtained?
    \item[] Answer: \answerNA{} 
    \item[] Justification: It is not applicable to our settings.
    \item[] Guidelines:
    \begin{itemize}
        \item The answer NA means that the paper does not involve crowdsourcing nor research with human subjects.
        \item Depending on the country in which research is conducted, IRB approval (or equivalent) may be required for any human subjects research. If you obtained IRB approval, you should clearly state this in the paper. 
        \item We recognize that the procedures for this may vary significantly between institutions and locations, and we expect authors to adhere to the NeurIPS Code of Ethics and the guidelines for their institution. 
        \item For initial submissions, do not include any information that would break anonymity (if applicable), such as the institution conducting the review.
    \end{itemize}

\item {\bf Declaration of LLM usage}
    \item[] Question: Does the paper describe the usage of LLMs if it is an important, original, or non-standard component of the core methods in this research? Note that if the LLM is used only for writing, editing, or formatting purposes and does not impact the core methodology, scientific rigorousness, or originality of the research, declaration is not required.
    \item[] Answer: \answerNA{} 
    \item[] Justification: It is not applicable to our settings.
    \item[] Guidelines:
    \begin{itemize}
        \item The answer NA means that the core method development in this research does not involve LLMs as any important, original, or non-standard components.
        \item Please refer to our LLM policy (\url{https://neurips.cc/Conferences/2025/LLM}) for what should or should not be described.
    \end{itemize}

\end{enumerate}

\end{document}